\def\eqref#1{equation~\ref{#1}}
\def\1{\bm{1}}
\def\vmu{{\bm{\mu}}}
\def\vb{{\bm{b}}}
\def\vc{{\bm{c}}}
\def\vf{{\bm{f}}}
\def\vr{{\bm{r}}}
\def\vx{{\bm{x}}}
\def\mI{{\bm{I}}}
\DeclareMathAlphabet{\mathsfit}{\encodingdefault}{\sfdefault}{m}{sl}
\SetMathAlphabet{\mathsfit}{bold}{\encodingdefault}{\sfdefault}{bx}{n}
\newcommand{\E}{\mathbb{E}}
\newcommand{\KL}{D_{\mathrm{KL}}}
\newcommand{\method}{\textsc{ContextDiff}\xspace}
\crefname{section}{Sec.}{Secs.}
\Crefname{section}{Section}{Sections}
\Crefname{table}{Table}{Tables}
\crefname{table}{Tab.}{Tabs.}
\newtheorem{lemma}{lemma}
\newtheorem{proposition}{Proposition}
\newtheorem{proof}{Proof}
\title{Contextualized Diffusion Models for Text-Guided Image and Video Generation}
\author{Ling Yang$^{1*\dag}$\quad Zhilong Zhang$^{1*}$\quad Zhaochen Yu$^{1}$\thanks{Contributed equally.}\quad Jingwei Liu$^{1}$\quad Minkai Xu$^{2}$\\ \textbf{Stefano Ermon}$^{2}$\quad \textbf{Bin Cui}$^{1}$\thanks{Corresponding authors.} \\
$^{1}$Peking University\quad $^{2}$Stanford University\\
\texttt{\{yangling0818, bityzcedu, jingweiliu1996\}@163.com}\\
\texttt{\{minkai, ermon\}@cs.stanford.edu,\{zzl2018math, bin.cui\}@pku.edu.cn}
}
\begin{document}

\maketitle

\begin{abstract}
Conditional diffusion models have exhibited superior performance in high-fidelity text-guided visual generation and editing. Nevertheless, prevailing text-guided visual diffusion models primarily focus on incorporating text-visual relationships exclusively into the reverse process, often disregarding their relevance in the forward process. This inconsistency between forward and reverse processes may
limit the precise conveyance of textual semantics in visual synthesis results. To address this issue, we propose a novel and general contextualized diffusion model (\method) by incorporating the cross-modal context encompassing interactions and alignments between text condition and visual sample into forward and reverse processes.
We propagate this context to all timesteps in the two processes to adapt their trajectories, thereby facilitating cross-modal conditional modeling. 
We generalize our contextualized diffusion to both DDPMs and DDIMs with theoretical derivations, and demonstrate the effectiveness of our model in evaluations with two challenging tasks: text-to-image generation, and text-to-video editing. 
In each task, our \method achieves new state-of-the-art performance, significantly enhancing the semantic alignment between text condition and generated samples, as evidenced by quantitative and qualitative evaluations. Our code is available at \href{https://github.com/YangLing0818/ContextDiff}{https://github.com/YangLing0818/ContextDiff}
\end{abstract}

\section{Introduction}
Diffusion models \citep{yang2023diffusion} have made remarkable progress in visual generation and editing. They are first introduced by \citet{sohl2015deep} and then improved by \citet{song2019generative} and \citet{ho2020denoising}, and can now generate samples with unprecedented quality and diversity \citep{rombach2022high,yang2023improving,podell2023sdxl,yang2024mastering,yang2024structure}. As a powerful representation space for multi-modal data, CLIP latent space \citep{radford2021learning} is widely used by diffusion models to semantically modify images/videos by moving in the direction of any encoded text condition for controllable text-guided visual synthesis \citep{ramesh2022hierarchical,saharia2022photorealistic,ho2022imagen,molad2023dreamix,wu2022tune,khachatryan2023text2video}.

Generally, text-guided visual diffusion models 
gradually disrupt visual input by adding noise through a fixed forward process, and learn its reverse process to 
generate samples from noise in a denoising way by incorporating clip text embedding. For example, text-to-image diffusion models usually estimate the similarity between text and noisy data to guide pretrained unconditional DDPMs \citep{dhariwal2021diffusion,nichol2022glide}, or directly train a
conditional DDPM from scratch by incorporating text
into the function approximator of the reverse process \citep{rombach2022high,ramesh2022hierarchical}. Text-to-video diffusion models mainly build upon pretrained DDPMs, and extend them with designed temporal modules (\textit{e.g.}, spatio-temporal attention) and DDIM \cite{song2020denoising} inversion for both temporal and structural consistency \citep{wu2022tune,qi2023fatezero}.

\begin{figure*}[h]
\centering
\vspace{-0.17in}
\includegraphics[width=0.99\linewidth]{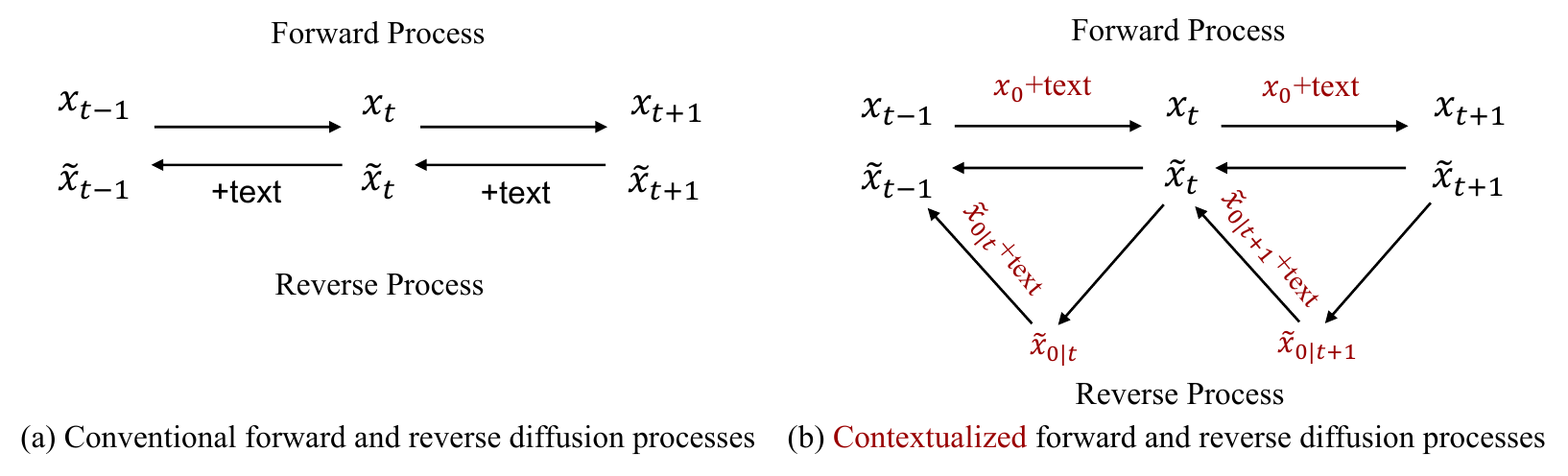}
\vspace{-0.07in}
\caption{
A simplified illustration of text-guided visual diffusion models with (a) conventional forward and reverse diffusion processes, (b) our contextualized forward and reverse diffusion processes.
$\Tilde{x}_0$ denotes the estimation of visual sample by the denoising network at each timestep.}
\vskip -0.2in
\label{fig:motivation}
\end{figure*}
Despite all this progress, there are common limitations in the majority of existing text-guided visual diffusion models. They typically employ an unconditional forward process but rely on a text-conditional reverse process for denoising and sample generation.
This inconsistency in the utilization of text condition between forward and reverse processes would constrain the potential of  conditional diffusion models. Furthermore, they usually neglect the cross-modal context, which encompasses the interaction and alignment between textual and visual modalities in the diffusion process, which may limit the precise expression of textual semantics in visual synthesis results.

To address these limitations, we propose a novel and general cross-modal contextualized diffusion model (\method) that harnesses cross-modal context to facilitate the learning capacity of cross-modal diffusion models.
As illustrated in \cref{fig:motivation}, we compare our contextualized diffusion models with conventional text-guided diffusion models. We incorporate the cross-modal interactions between text condition and image/video sample into the forward process, serving as a context-aware adapter to optimize diffusion trajectories.
Furthermore, to facilitate the conditional modeling in the reverse process and align it with the adapted forward process, we also use the context-aware adapter to adapt the sampling trajectories. 
In contrast to traditional textual guidance employed for visual sampling process \citep{rombach2022high,saharia2022photorealistic}, our \method offers a distinct approach by providing enhanced and contextually informed guidance for visual sampling.
We generalize our contextualized diffusion to both DDPMs and DDIMs for benefiting both cross-modal generation and editing tasks, and provide detailed theoretical derivations.
We demonstrate the effectiveness of our \method in two challenging text-guided visual synthesis tasks: text-to-video generation and text-to-video editing. Empirical results reveal that our contextualized diffusion models can consistently improve the semantic alignment between text conditions and synthesis results over existing diffusion models in both tasks.

To summarize, we have made the following contributions: \textbf{(i)} To the best of our knowledge, We for the first time propose \method to consider cross-modal interactions as context-aware trajectory adapter to contextualize both forward and sampling processes in text-guided visual diffusion models. \textbf{(ii)} We generalize our contextualized diffusion to DDPMs and DDIMs with thereotical derivations for benefiting both cross-modal visual generation and editing tasks. \textbf{(iii)} Our \method achieves \textbf{new state-of-the-art performance on text-to-image generation and text-to-video editing tasks (in \cref{fig-video-qualitative})}, consistently demonstrating the superiority of our \method over existing diffusion models with both quantitative and qualitative comparisons.

\begin{figure*}[ht]
\centering
\includegraphics[width=1\textwidth]{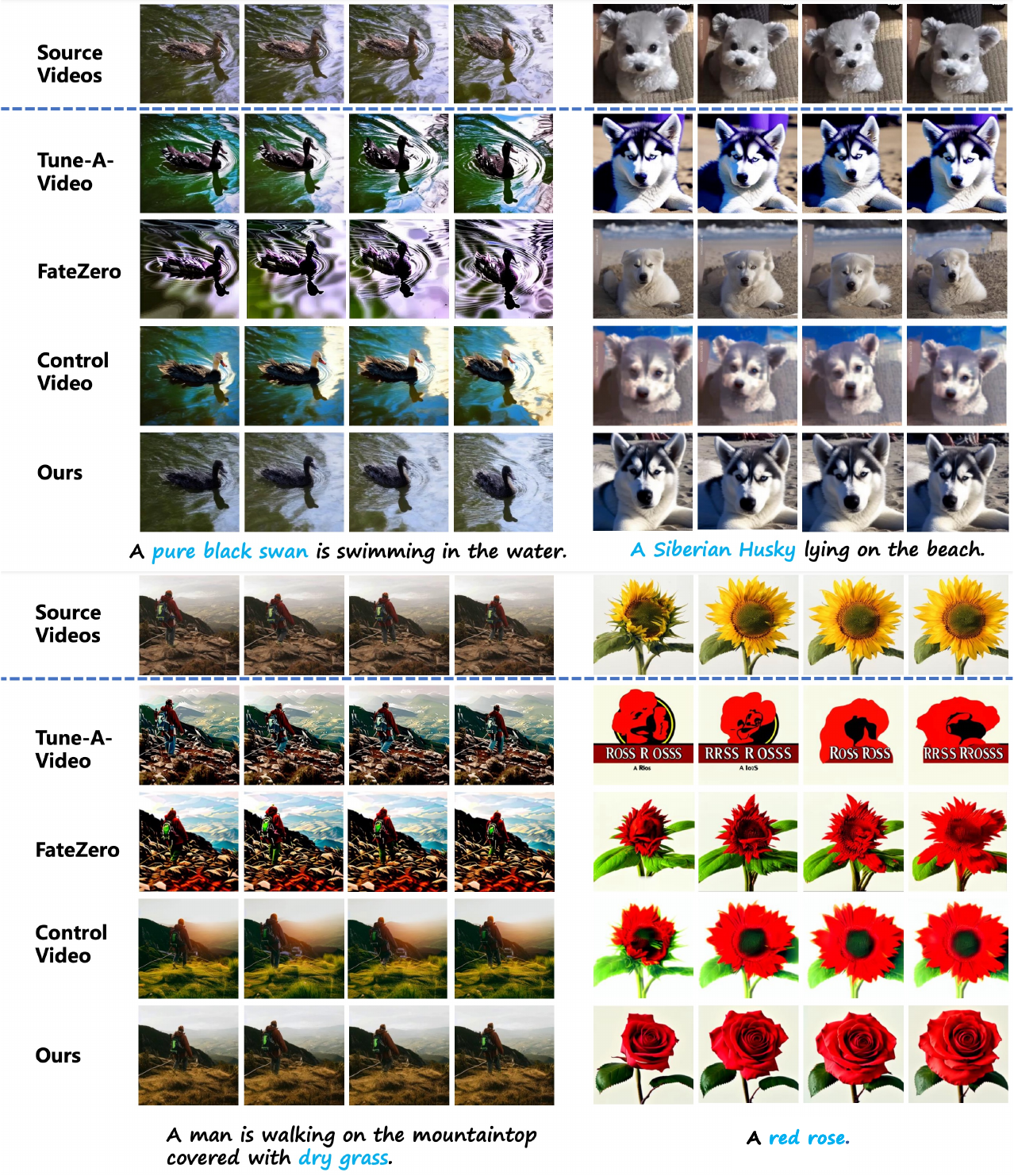}
\caption{\textbf{Qualitative comparison in text-to-video editing}, edited text prompt is denoted in color. Our \method achieves best semantic alignment, image fidelity, and editing quality.}
\label{fig-video-qualitative}
\end{figure*}

\section{Related Work}
\paragraph{Text-Guided Visual Diffusion Models}
Text-to-image diffusion models \citep{yang2023improving,podell2023sdxl,yang2024mastering,zhang2024realcompo} mainly incorporate the text semantics that extracted by pretrained language models into the image sampling process \citep{nichol2022glide}. For example, 
Latent Diffusion Models (LDMs) \citep{rombach2022high} apply diffusion models on the latent space of powerful pretrained autoencoders for high-resolution synthesis. RPG \citep{yang2024mastering} proposes a LLM-grounded text-to-image diffusion and utilizes the multimodal chain-of-thought reasoning ability of MLLMs to enable complex/compositional image generation. 
Regarding text-to-video diffusion models, recent methods mainly leverage the pretrained text-to-image diffusion models in zero-shot \citep{qi2023fatezero,wang2023zero} and one-shot \citep{wu2022tune,liu2023video} methodologies for text-to-video editing. For example, Tune-A-Video \citep{wu2022tune} employs DDIM \citep{song2020denoising} inversion to provide structural guidance for sampling, and proposes efficient attention tuning for improving temporal consistency. FateZero \citep{qi2023fatezero} fuses the attention maps in the inversion process and generation process to preserve the motion and structure consistency during editing. In this work, we for the first time improve both text-to-image and text-to-video diffusion models with a general context-aware trajectory adapter.

\begin{figure*}[h]
\centering
\vspace{-0.4in}
\includegraphics[width=0.99\linewidth]{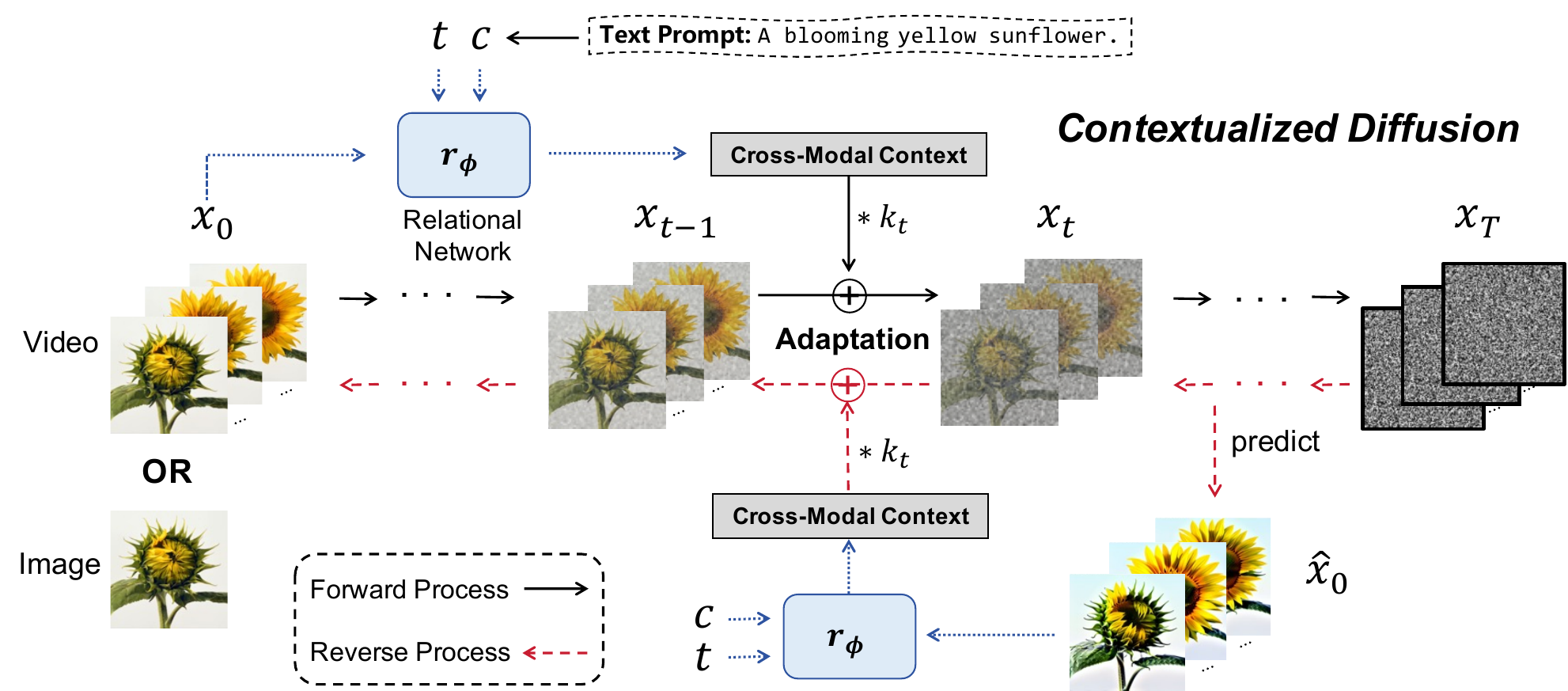}
\caption{Illustration of our \method.}
\vskip -0.2in
\label{fig:method}
\end{figure*}

\paragraph{Diffusion Trajectory Optimization}
Our work focuses on optimizing the diffusion trajectories that denotes the distribution of the entire diffusion process.
Some methods modify the forward process with a carefully-designed transition kernel or a new data-dependent initialization distribution \citep{liu2022flow,hoogeboom2022blurring,dockhorn2021score,lee2021priorgrad,karras2022elucidating}. For example, Rectified Flow \citep{liu2022flow} learns a straight path connecting the data distribution and prior distribution. 
Grad-TTS \citep{popov2021grad} and PriorGrad \citep{lee2021priorgrad} introduce conditional forward process with data-dependent priors for audio diffusion models.
Other methods mainly parameterize the forward process with additional neural networks \citep{zhang2021diffusion,kim2022maximum,kingma2021variational}. VDM \citep{kingma2021variational} parameterizes the noise schedule with a monotonic neural network, which is jointly trained with the denoising network. 
However, these methods only utilize unimodal information in forward process, and thus are inadequate for handling complex multimodal synthesis tasks.
In contrast, our \method for the first time incorporates cross-modal context into the diffusion process for improving text-guided visual synthesis, which is more informative and contextual guidance compared to text guidance. 

\section{Method}

\subsection{Cross-Modal Contextualized Diffusion}
We aim to incorporate cross-modal context of each text-image(video) pair $(\vc,\vx_0)$ into the diffusion process as in \cref{fig:method}.
We use clip encoders to extract the embeddings of each pair, and adopt an relational network (\textit{e.g.}, cross attention) to model the interactions and alignments between the two modalities as cross-modal context. This context is then propagated to all timesteps of the diffusion process as a bias term (we highlight the critical parts of our \method in \textcolor{brown}{brown}):
\begin{equation}
    q_{\phi}(\vx_t|\vx_0,\vc)=\mathcal{N}(\vx_t,\sqrt{\Bar{\alpha}_t}\vx_0+\textcolor{brown}{k_t\vr_\phi(\vx_0,\vc,t)},(1-\Bar{\alpha}_t)\mI),
    \label{eq-ddpm-forward1}
\end{equation}
where scalar $k_t$ control the magnitude of the bias term, and we set the $k_t$ to $\sqrt{\Bar{\alpha}_t} \cdot (1-\sqrt{\Bar{\alpha}_t})$. $\vr_\phi(\cdot)$ is the relational network with trainable parameters $\phi$, it takes the visual sample $\vx_0$ and text condition $\vc$ as inputs and produces the bias with the same dimension as $\vx_0$. 

Concretely, the forward process is defined as  $q_\phi(\vx_1,\vx_2,...,\vx_T|\vx_0,\vc) = \prod_{t=1}^T q_\phi(\vx_t|\vx_{t-1},\vx_0,\vc)$. Given cross-modal context $\vr_\phi(\vx_0,\vc,t)$, the forward transition kernel depends on $\vx_{t-1},\vx_0,$ and $\vc$ :
\begin{equation}
    q_{\phi}(\vx_{t}|\vx_{t-1},\vx_0,\vc)=\mathcal{N}(\sqrt{\alpha_t}\vx_{t-1}+\textcolor{brown}{k_t\vr_\phi(\vx_0,\vc,t)}-\sqrt{\alpha_{t}}\textcolor{brown}{k_{t-1}\vr_\phi(\vx_0,\vc,t-1)},\beta_t\mI),
    \label{eq-ddpm-forward2}
\end{equation}
where $\beta_t = 1-\alpha_t$. This transition kernel gives marginal distribution as \cref{eq-ddpm-forward1} (proof in \cref{prof_process}). 
At each timestep $t$, we add a noise that explicitly biased by the cross-modal context.
With \cref{eq-ddpm-forward1} and \cref{eq-ddpm-forward2}, we can derive the posterior distribution of the forward process for $t>1$ (proof in \cref{prof_process}):

\begin{small}
\begin{equation}
    q_{\phi}(\vx_{t-1}|\vx_t,\vx_0,\vc)=\mathcal{N}(\frac{\sqrt{\Bar{\alpha}_{t-1}}\beta_t}{1-\Bar{\alpha}_t}\vx_0+\frac{\sqrt{\alpha_t}(1-\Bar{\alpha}_{t-1})}{1-\Bar{\alpha_t}}(\vx_t-\textcolor{brown}{\vb_t(\vx_0,\vc)})+\textcolor{brown}{\vb_{t-1}(\vx_0,\vc)},\frac{(1-\Bar{\alpha}_{t-1})\beta_t}{1-\Bar{\alpha}_t}\mI),
    \label{eq-ddpm-backward}
\end{equation}
\end{small}

where {$\vb_t(\vx_0,\vc)$} is an abbreviation form of {$k_t\vr_\phi(\vx_0,\vc,t)$}, and we use it for simplicity. With \cref{eq-ddpm-backward}, we can simplify the training objective which will be described latter.
In this way, we contextualize the entire diffusion process with a context-aware trajectory adapter.
In \method, we also utilize our context-aware context to adapt the reverse process of diffusion models, which encourages to align with the adapted forward process, and facilitates the precise expression of textual semantics in visual sampling process.


\subsection{Adapting Reverse Process} 
We aim to learn a contextualized reverse process $\{p_{\theta}(\vx_{t-1}|\vx_t,\vc)\}_{t=1}^T$ , which minimizes a variational upper bound of the negative log likelihood. $p_{\theta}(\vx_{t-1}|\vx_t,\vc)$ is gaussian kernel with learnable mean and pre-defined variance. Allowing the forward transition kernel to depend on $\vx_0$ and $\vc$, the objective function $\mathcal{L}_{\theta,\phi}$ of our \method can be formulated as (proof in \cref{prof_objective}):
\begin{equation}
    \begin{aligned}
        \mathcal{L}_{\theta,\phi} = \E_{{q_\phi(\vx_{1:T}|\vx_0,\vc)}}& \bigg[\KL(q_\phi(\vx_T|\vx_0,\vc) \Vert p(\vx_T|\vc))-\log p_{\theta}(\vx_0|\vx_1,\vc) \\&+\sum_{t>1}\KL(q_\phi(\vx_{t-1}|\vx_{t},\vx_0,\vc) \Vert p_{\theta}(\vx_{t-1}|\vx_t,\vc) )\bigg],
    \end{aligned}
    \label{eq-training-ori}
\end{equation}
where $\theta$ denotes the learnable parameters of denoising network in reverse process. \cref{eq-training-ori} uses KL
divergence to directly compare $p_{\theta}(\vx_{t-1}|\vx_t,\vc)$ against the adapted forward process posteriors, which are tractable
when conditioned on $\vx_0$ and $\vc$. 
If $\vr_\phi$ is identically zero, the objective can be viewed as the original DDPMs. Thus \method is theoretically capable of achieving better likelihood compared to original DDPMs.


Kindly note that optimizing $\mathcal{L}_t= E_{q_\phi}\KL(q_\phi(\vx_{t-1}|\vx_{t},\vx_0,\vc) \Vert p_{\theta}(\vx_{t-1}|\vx_t,\vc))$ is equivalent to matching the means for $q_\phi(\vx_{t-1}|\vx_{t},\vx_0,\vc)$ and $p_{\theta}(\vx_{t-1}|\vx_t,\vc)$, as they are gaussian distributions with the same variance. 
According to \cref{eq-ddpm-backward}, directly matching the means requires to parameterize a neural network $\vmu_\theta$ that not only predicting $\vx_0$, but also matching the complex cross-modal context information in the forward process, i.e., 
\begin{equation}
    \mathcal{L}_{\theta,\phi,t} = \big|\big|\vmu_\theta(\vx_t,c,t)-\frac{\sqrt{\Bar{\alpha}_{t-1}}\beta_t}{1-\Bar{\alpha}_t}\vx_0-\frac{\sqrt{\alpha_t}(1-\Bar{\alpha}_{t-1})}{1-\Bar{\alpha_t}}(\vx_t-\textcolor{brown}{\vb_t(\vx_0,\vc)})-\textcolor{brown}{\vb_{t-1}(\vx_0,\vc)}\big|\big|^2_2
\end{equation}
\paragraph{Simplified Training Objective} Directly optimizing this objective is inefficient in practice because it needs to compute the bias twice at each timestep.
To simplify the training process, we employ a denoising network $\vf_\theta(\vx_t,\vc,t)$ to directly predict $\vx_0$ from $\vx_t$ at each time step t, and insert the predicted $\hat{\vx}_0$ in \cref{eq-ddpm-backward}, i.e., $p_{\theta,\phi}(\vx_{t-1}|\vx_t,\vc) = q_\phi(\vx_{t-1}|\vx_{t},\hat{\vx}_0,\vc)$. Under mild condition, we can derive that the reconstruction objective $\E||f_\theta-\vx_0||^2_2 $ is an upper bound of $\mathcal{L}_t$, and thus an upper bound of negative log likehood (proof in \cref{prof_objective}).
Our simplified training objective is: 
\begin{equation}
    \mathcal{L}_{\theta,\phi} = \sum_{t=1}^T\lambda_t\E_{\vx_0,\vx_t}||\vf_\theta(\vx_{t,\phi},\vc,t)-\vx_0||^2_2,\label{eq_objective_final}
\end{equation}
where $\lambda_t$ is a weighting scalar. 
We set $k_T = 0$ and there is no learnable parameters in $\KL(q(\vx_T|\vx_0,\vc) \Vert p(\vx_T|\vc))$, which can be ignored. To adapt the reverse process at each timestep, we can efficiently sample a noisy sample $\vx_t$ according to \cref{eq-ddpm-forward1} using re-parameterization trick, which has included parameterized cross-modal context {$k_t\vr_\phi(\vx_0,\vc,t)$}, and then passes $\vx_t$ into the denoising network. The gradients will be propagated to $\vr_\phi$ from the denoising network, and our context-aware adapter and denoising network are jointly optimized in training.


\paragraph{Context-Aware Sampling} During sampling, we use the denoising network to predict $\hat{\vx}_0$, and the predicted context-aware adaptation $\vr_\phi(\hat{\vx}_0,\vc,t)$ is then used to contextualize the sampling trajectory. Hence the gaussian kernel $p_\theta(\vx_{t-1}|\vx_t, \vc)$ has mean: 
\begin{equation}
    \begin{aligned} \frac{\sqrt{\Bar{\alpha}_{t-1}}\beta_t}{1-\Bar{\alpha}_t}\hat{\vx}_0+\frac{\sqrt{\alpha_t}(1-\Bar{\alpha}_{t-1})}{1-\Bar{\alpha_t}}(\vx_t-\textcolor{brown}{\vb_t(\hat{\vx}_0,\vc)})+\textcolor{brown}{\vb_{t-1}(\hat{\vx}_0,\vc)}, 
    \end{aligned}
\end{equation}
where {$\vb_t(\hat{\vx}_0,\vc)$} = {$k_t\vr_\phi(\hat{\vx}_0,c,t)$}, and variance $\frac{(1-\Bar{\alpha}_{t-1})\beta_t}{1-\Bar{\alpha}_t}\mI$. In this way, our \method can effectively adapt sampling process with cross-modal context, which is more informative and contextual guided compared to traditional text guidance \citep{rombach2022high,saharia2022photorealistic}. 
Next, we will introduce how to generalize our contextualized diffusion to DDIMs for fast sampling.


\begin{figure*}[ht]
\vspace{-6mm}
\centering
\includegraphics[width=0.9\textwidth]{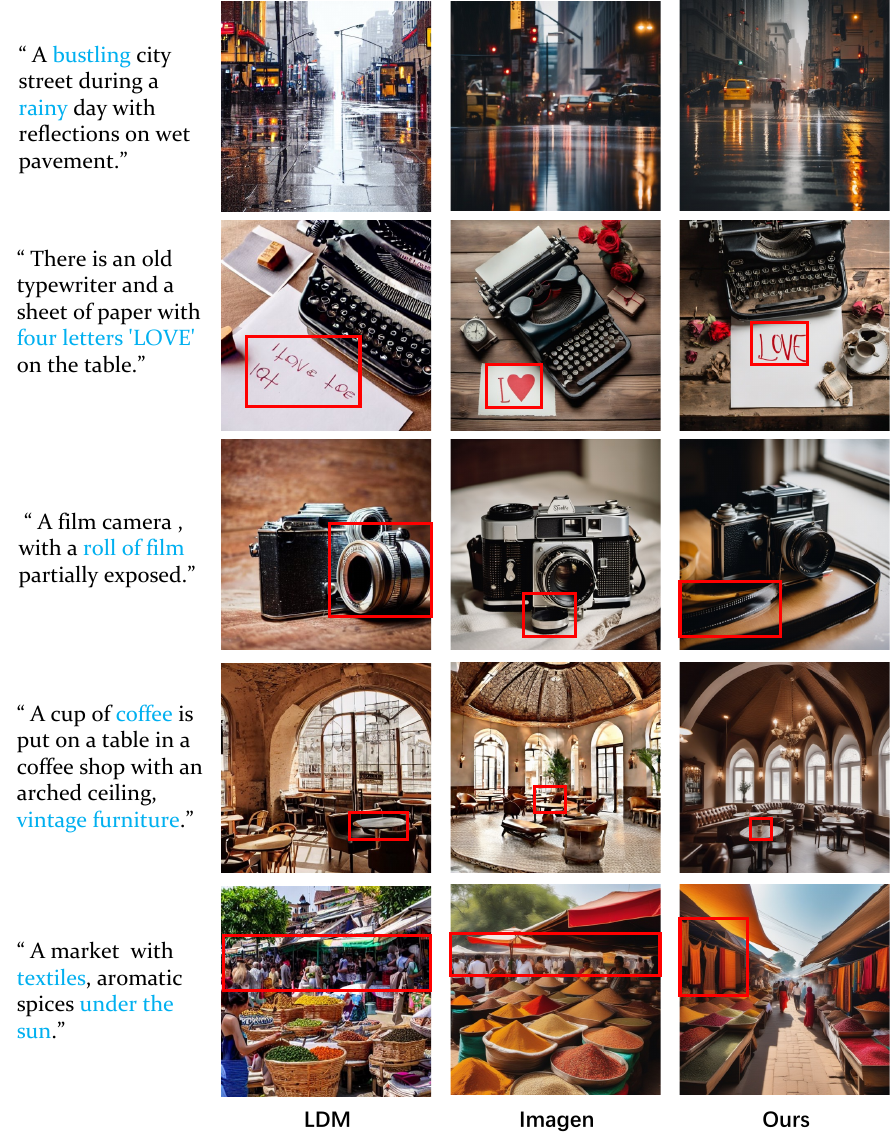}
\vspace{-5mm}
\caption{Synthesis examples demonstrating text-to-image capabilities of for various text prompts with LDM, Imagen, and
\method (Ours). Our model can better express the semantics of the texts marked in blue. We use \textcolor{red}{red} boxes to highlight critical fine-grained parts where LDM and Imagen fail to align with texts. For example, in second row, only our method successfully generates the four letters spelling "LOVE". In third row, we generate the specific detail of a film roll, while other methods lose this detail.}
\label{app-fig-more-image}
\end{figure*}

\subsection{Generalizing Contextualized Diffusion to DDIMs}
DDIMs \citep{song2020denoising} accelerate the reverse process of pretrained DDPMs, 
which are also faced with the inconsistency problem that exists in DDPMs.
Therefore, we address this problem by generalizing our contextualized diffusion to DDIMs.
Specifically, we define a posterior distribution $q_\phi(\vx_{t-1}|\vx_t,\vx_0,\vc)$ for each timestep, thus the forward diffusion process has the desired distribution:
\begin{equation}
    q_\phi(\vx_t|\vx_0,\vc)=\mathcal{N}(\vx_t,\sqrt{\Bar{\alpha}_t}\vx_0+\textcolor{brown}{\vb_t(\vx_0,\vc)},(1-\Bar{\alpha}_t)\mI),
\end{equation}
If the posterior distribution is defined as:
\begin{equation}
    q(\vx_{t-1}|\vx_t,\vx_0,\vc) = \mathcal{N}(\sqrt{\bar{\alpha}_{t-1}}\vx_0+\sqrt{1-\bar{\alpha}_{t-1}-\sigma^2_t}*\frac{\vx_t-\sqrt{\bar{\alpha}_t}\vx_0}{\sqrt{1-\bar{\alpha}_t}},\sigma^2_t\mI),
\end{equation}
then the mean of $q_\phi(\vx_{t-1}|\vx_0,\vc)$ is (proof in \cref{prof_process}): 
\begin{equation}
    \begin{aligned}
       \sqrt{\bar{\alpha}_{t-1}}\vx_0+\textcolor{brown}{\vb_t(\vx_0,\vc)}*\frac{\sqrt{1-\bar{\alpha}_{t-1}-\sigma^2_t}}{\sqrt{1-\bar{\alpha}_t}}
    \end{aligned}
\end{equation}

To match the forward diffusion, we need to replace the adaptation $k_t\vr_\phi(\vx_0,\vc,t)*\frac{\sqrt{1-\bar{\alpha}_{t-1}-\sigma^2_t}}{\sqrt{1-\bar{\alpha}_t}}$with $k_{t-1}\vr_\phi(\vx_0,\vc,t-1)$.
Given $\sigma^2_t =0$, the sampling process becomes deterministic:
\begin{equation}
    \begin{aligned}
        \Tilde{\vx}_{t-1} &= \sqrt{\bar{\alpha}_{t-1}}\hat{\vx}_0+\sqrt{1-\bar{\alpha}_{t-1}}*\frac{\vx_t-\sqrt{\bar{\alpha}_t}\hat{\vx}_0}{\sqrt{1-\bar{\alpha}_t}}\\
        \vx_{t-1} &= \Tilde{\vx}_{t-1}-\textcolor{brown}{\vb_t(\hat{\vx}_0,\vc)}*\frac{\sqrt{1-\bar{\alpha}_{t-1}}}{\sqrt{1-\bar{\alpha}_t}}+\textcolor{brown}{\vb_{t-1}(\hat{\vx}_0,\vc)}.
    \end{aligned}
\end{equation}
In this way, DDIMs can better convey textual semantics in generated samples when accelerating the sampling of pretrained DDPMs, which will be evaluated in later text-to-video editing task. 


\section{Experiments}
We conduct experiments on two text-guided visual synthesis tasks: text-to-image generation, and text-to-video editing. 
For the text-to-image generation task, we aim to generate images given textual descriptions. 
For the text-to-video editing task, we aim to edit videos given textual descriptions.
We evaluate the semantic alignment between text conditions and synthesis results in the two tasks with qualitative and quantitative comparisons, and evaluate temporal consistency on text-to-video editing task. We further evaluate the generalization ability and model convergence of our \method.
\subsection{Text-to-Image Generation}
\paragraph{Datasets and Metrics.}
Following \cite{rombach2022high,saharia2022photorealistic}, we use public LAION-400M \citep{schuhmann2021laion}, a
dataset with CLIP-filtered 400 million image-text pairs for training \method.
We conduct evaluations with FID and CLIP score \citep{hessel2021clipscore,radford2021learning}, which aim to assess the generation quality and resulting image-text alignment. 
\paragraph{Implementation Details.} For our context-aware adapter, we use text CLIP and image CLIP \citep{radford2021learning} (ViT-B/32) to encode text and image inputs, and adopt multi-head cross attention \citep{vaswani2017attention} to model cross-modal interactions with 8 parallel attention layers. For the diffusion backbone, we mainly follow Imagen \citep{saharia2022photorealistic} using a $64\times64$ base diffusion model \citep{nichol2021improved,saharia2022palette} and a super-resolution diffusion
models to upsample a 64 × 64 generated image into a 256 × 256 image. 
For $64\times64\rightarrow 256\times256$ super-resolution, we use the efficient U-Net model in Imagen for improving memory efficiency.
We condition on the entire sequence of text embeddings \citep{raffel2020exploring} by adding cross attention
\citep{ramesh2022hierarchical} over the text embeddings at multiple resolutions. More details about the hyper-parameters can be found in \cref{app-hyperparameters}.

\begin{table*}[ht]
\small
\caption{\textbf{Quantitative results in text-to-image generation} with FID score on MS-COCO dataset for 256 × 256 image resolution.}
\vskip -0.15in
\label{tab:eval_coco}
\vspace{5pt}
\centering
\label{tab:zero_shot_mscoco}
\begin{tabular}{l|c| c c}
\toprule
\multirow{2}{*}{\bfseries{Approach}} & \multirow{2}{*}{\bfseries{Model Type}} &
\multirow{2}{*}{\bfseries{FID-30K}} & \bfseries{Zero-shot} \\
 & & & \bfseries{FID-30K} \\
\midrule
DF-GAN \citep{tao2022df} &  GAN & 21.42 & - \\
DM-GAN + CL \citep{ye2021improving} &  GAN & 20.79 & - \\
LAFITE \citep{zhou2022towards} & GAN & 8.12 & - \\
Make-A-Scene \citep{gafni2022make} & Autoregressive & 7.55 & -\\
\midrule
DALL-E \citep{pmlr-v139-ramesh21a} & Autoregressive & - & 17.89  \\
Stable Diffusion \citep{rombach2022high}  & Continuous Diffusion & -  & 12.63 \\
GLIDE \citep{pmlr-v162-nichol22a} & Continuous Diffusion & -  & 12.24 \\
DALL-E 2 \citep{ramesh2022hierarchical} & Continuous Diffusion & -  & 10.39 \\
Improved VQ-Diffusion \citep{tang2022improved} & Discrete Diffusion&-&8.44\\
Simple Diffusion \citep{hoogeboom2023simple}&Continuous Diffusion&-&8.32\\
Imagen \citep{saharia2022photorealistic}  & Continuous Diffusion & -  & 7.27 \\
Parti \citep{yu2022scaling} & Autoregressive & -  & 7.23\\
Muse \citep{chang2023muse}&Non-Autoregressive&-&7.88\\
eDiff-I \citep{balaji2022ediffi}& Continuous Diffusion&-&6.95\\
ERNIE-ViLG 2.0 \citep{feng2023ernie}&Continuous Diffusion&-&6.75\\
RAPHAEL \citep{xue2023raphael} &Continuous Diffusion&-&6.61\\
\midrule
$\textbf{\method}$ & Continuous Diffusion & - & \textbf{6.48} \\
\bottomrule
\end{tabular}
\vskip -0.05in
\end{table*}

\begin{table}
\small
\caption{\textbf{Quantitative results in text-to-video editing.} Text. and Temp. denote CLIP-text and CLIP-temp, respectively. User study shows the preference rate of \method against baselines via human evaluation.}
\vskip -0.1in
\label{tb:baseline camparision}
\centering
\begin{tabular}{lcccccccccc} 
\toprule
&\multicolumn{2}{c}{Metric} &  \multicolumn{2}{c}{User Study} \\
\cmidrule(lr){2-3} \cmidrule(lr){4-5} 
Method &  Text.$\uparrow$ & Temp.$\uparrow$ & Text. ($\%$)$\uparrow$&Temp.($\%$)$\uparrow$  \\
\midrule
Tune-A-Video~\citep{wu2022tune} & 0.260 & 0.934 & 91& 84\\
FateZero~\citep{qi2023fatezero} & 0.252 & 0.954  & 84& 75\\
ControlVideo~\citep{zhao2023controlvideo}&0.258 & 0.961&81& 73\\
\midrule 
\textbf{\method} & \textbf{0.274} & \textbf{0.970} & - &- \\
\bottomrule
\end{tabular}
\vskip -0.2in
\label{tab-video-quantitative}
\end{table}

\paragraph{Quantitative and Qualitative Results} Following previous works \citep{rombach2022high,ramesh2022hierarchical,saharia2022photorealistic}, we make quantitative evaluations \method on the MS-COCO dataset using zero-shot FID score, which measures the quality and diversity of generated images. Similar to \cite{rombach2022high,ramesh2022hierarchical,saharia2022photorealistic}, 30,000 images are randomly selected from
the validation set for evaluation. As demonstrated in \cref{tab:eval_coco}, our \method achieves a new state-of-the-art performance on text-to-image generation task with 6.48 zero-shot FID score, outperforming previous dominant diffusion models such as Stable Diffusion \citep{rombach2022high}, DALL-E 2 \citep{ramesh2022hierarchical}, and Imagen \citep{saharia2022photorealistic}.  
We also make qualitative comparisons in \cref{app-fig-more-image}, and find that our \method can achieve more precise semantic alignment between text prompt and generated image than previous methods, demonstrating the effectiveness of incorporating cross-modal context into diffusion models. 

\begin{figure*}[ht]
\centering
\includegraphics[width=1.\textwidth]{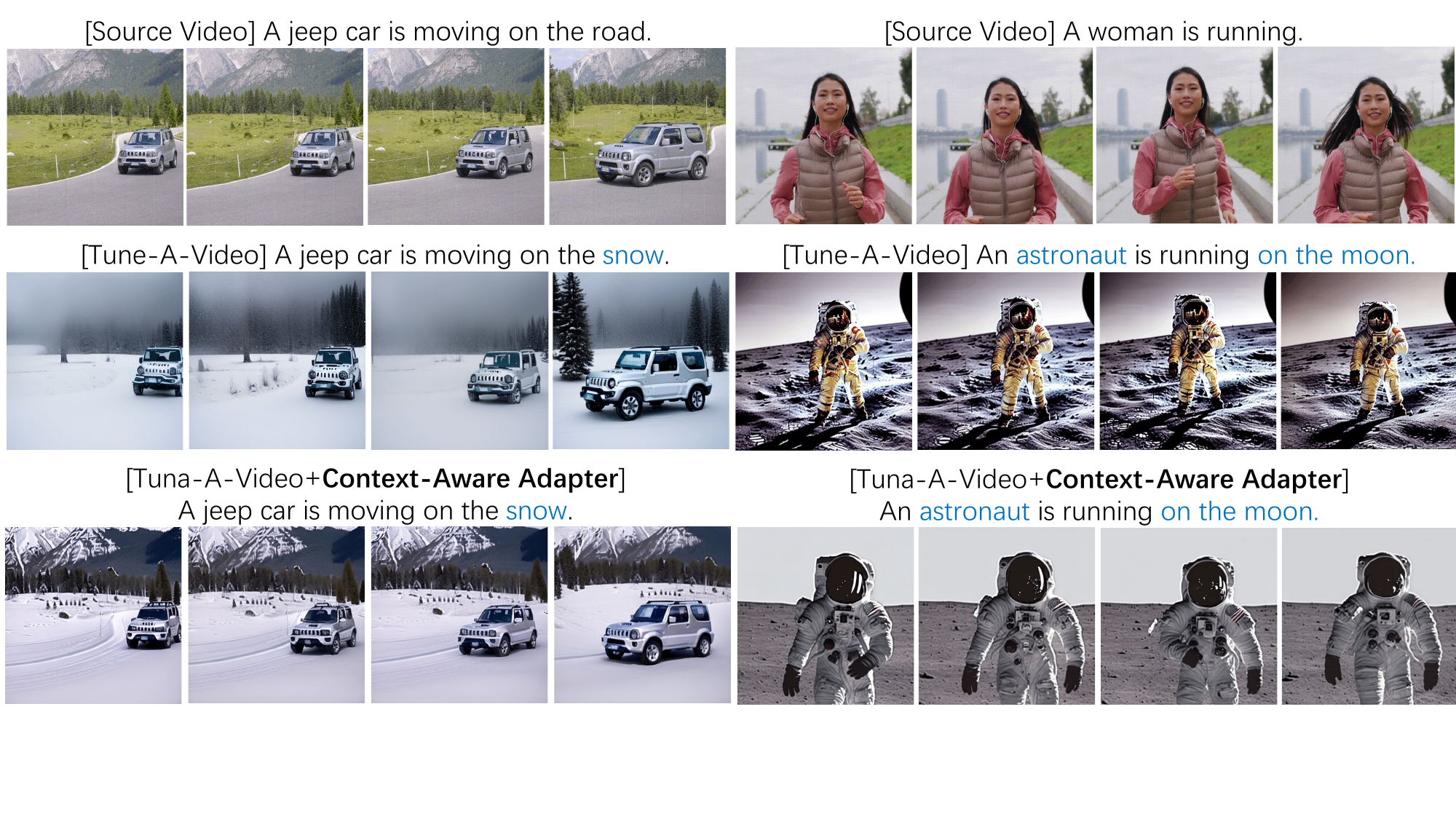}
\vskip -0.4in
\caption{Generalizing our context-aware adapter to Tune-A-Video \citep{wu2022tune}.}
\label{fig:qualitative-tune}
\vskip -0.2in
\end{figure*}

\subsection{Text-to-Video Editing}
\paragraph{Datasets and Metrics}
To demonstrate the strength of our \method for text-to-video edting, we use 42 representative videos taken from DAVIS dataset \citep{pont2017davis} and other in-the-wild videos following previous works \citep{wu2022tune,qi2023fatezero,bar2022text2live,esser2023structure}.  These videos cover a range of categories including animals, vehicles, and humans. 
To obtain video footage, we use BLIP-2 \citep{li2023blip} for automated captions. We also use their designed prompts for each video, including object editing, background changes, and style transfers. 
To measure textual alignment, we compute
average CLIP score between all frames of output videos
and corresponding edited prompts.
For temporal consistency, we compute
CLIP \citep{radford2021learning} image embeddings on all frames of output videos
and report the average cosine similarity between all pairs of
video frames. 
Moreover, We perform user study to quantify text alignment, and temporal consistency by pairwise comparisons between the baselines and our \method. A total of 10 subjects participated in this user study. Taking text alignment as an
example, given a source video, the participants are instructed to select which edited video is more
aligned with the text prompt in the pairwise comparisons between the baselines and \method.
\paragraph{Implementation Details} In order to reproduce the baselines of Tune-A-Video \citep{wu2022tune}, FateZero \citep{qi2023fatezero}, and ControlVideo \citep{zhao2023controlvideo}, we use their official repositories for one-shot video tuning. Following FateZero, we use
the trained Stable Diffusion v1.4 \citep{rombach2022high} as the base text-to-image diffusion model, and fuse the attention maps in DDIM inversion \citep{song2020denoising} and sampling processes for retaining both structural and motion information. We
fuse the attentions in the interval of $t \in [0.5 \times T, T]$ of the
DDIM step with total timestep T = 20. For context-aware adapter, we use the same encoders and cross attention as in text-to-image generation. We additionally incorporate spatio-temporal attention, which includes spatial self-attention and temporal causal attention, into our context-aware adapter for capturing spatio-temporal consistency. For each source video, we tune our adapter using source text prompt for learning both context-aware structural and motion information, and use the learned adapter to conduct video editing with edited text prompt. Details about the hyper-parameters are in \cref{app-hyperparameters}.

\paragraph{Quantitative and Qualitative Results}
We report our quantitative and qualitative results
in \cref{tab-video-quantitative} and \cref{fig-video-qualitative}. 
Extensive results demonstrate that \method substantially outperforms all these baselines in both textual alignment and temporal consistency. 
Notably, in the textual alignment in user study, we outperform the baseline by a significant margin (over 80\%), demonstrating the superior cross-modal understanding of our contextualized diffusion. In qualitative comparisons, we observe that \method not only achieves better semantic alignment, but also preserves the structure information in source video. Besides, the context-aware adapter in our contextualized diffusion can be generalized to previous methods, which substantially improves the generation quality as in \cref{fig:qualitative-tune}. More results demonstrating our generalization ability can be found in \cref{app-sec-general}.

\subsection{Generalizing to Class-to-Image and Layout-to-Image Generation}
We generalize our context-aware adapter into class and layout conditional generation tasks. We replace the text encoder in original adapter with ResNet blocks for embedding classes or layouts, and keep the original image encoder and cross-attention module for obtaining cross-modal context information. We put both quantitative and qualitative results in \cref{table:class,table:layout} and \cref{app-class-to-image,app-layout-to-image}. From the results, we conclude that our context-aware adapter can benefit the conditional diffusion models with different condition modalities and enable more realistic and precise generation consistent with input conditions, demonstrating the satisfying generalization ability of our method. 
\begin{table}[ht]
\caption{Performance comparison in class-to-image generation on ImageNet 256$\times$256.}
\centering
\begin{tabular}{l ccccc} 
\toprule
 Method &  FID $\downarrow$ & IS $\uparrow$& Precision $\uparrow$ &Recall $\uparrow$ \\ 
\midrule
BigGAN \citep{brock2018large}&6.95&203.63&0.87&0.28\\
ADM-G \citep{dhariwal2021diffusion}&4.59& 186.70 &0.82& 0.52\\
LDM \citep{rombach2022high}&3.60& 247.67& 0.87& 0.48\\
LDM+\textbf{Context-Aware Adapter}&\textbf{2.97}&\textbf{273.04}&\textbf{0.89}&\textbf{0.55}\\
\bottomrule
\end{tabular}
\label{table:class}
\end{table}

\begin{table}[ht]
\caption{FID performance comparison in layout-to-image generation on MS-COCO 256$\times$256.}
\centering
\begin{tabular}{l c} 
\toprule
 Method &  FID$\downarrow$ \\ 
\midrule
VQGAN+T \citep{jahn2021high}&56.58\\
Frido \citep{fan2023frido}&37.14 \\
LDM \citep{rombach2022high}&40.91\\
LDM+\textbf{Context-Aware Adapter}&\textbf{34.58}\\
\bottomrule
\end{tabular}
\label{table:layout}
\end{table}

\begin{figure*}[ht]
\vspace{-6mm}
\centering
\includegraphics[width=1.\textwidth]{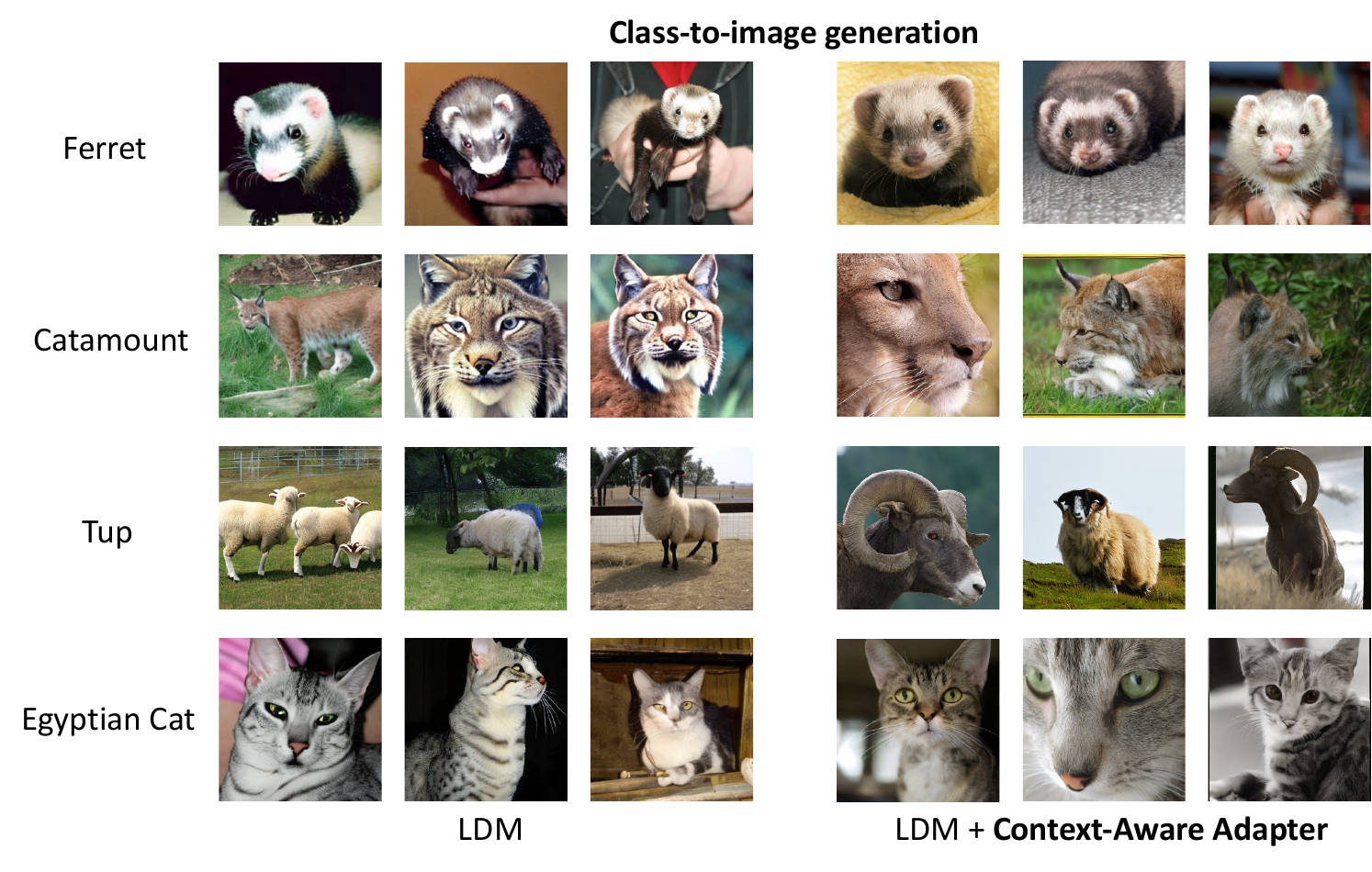}
\caption{Qualitative results in class-to-image generation on ImageNet 256$\times$256. Our context-aware adapter improves the generation quality of LDM.}
\label{app-class-to-image}
\end{figure*}

\begin{figure*}[ht]
\centering
\includegraphics[width=0.9\textwidth]{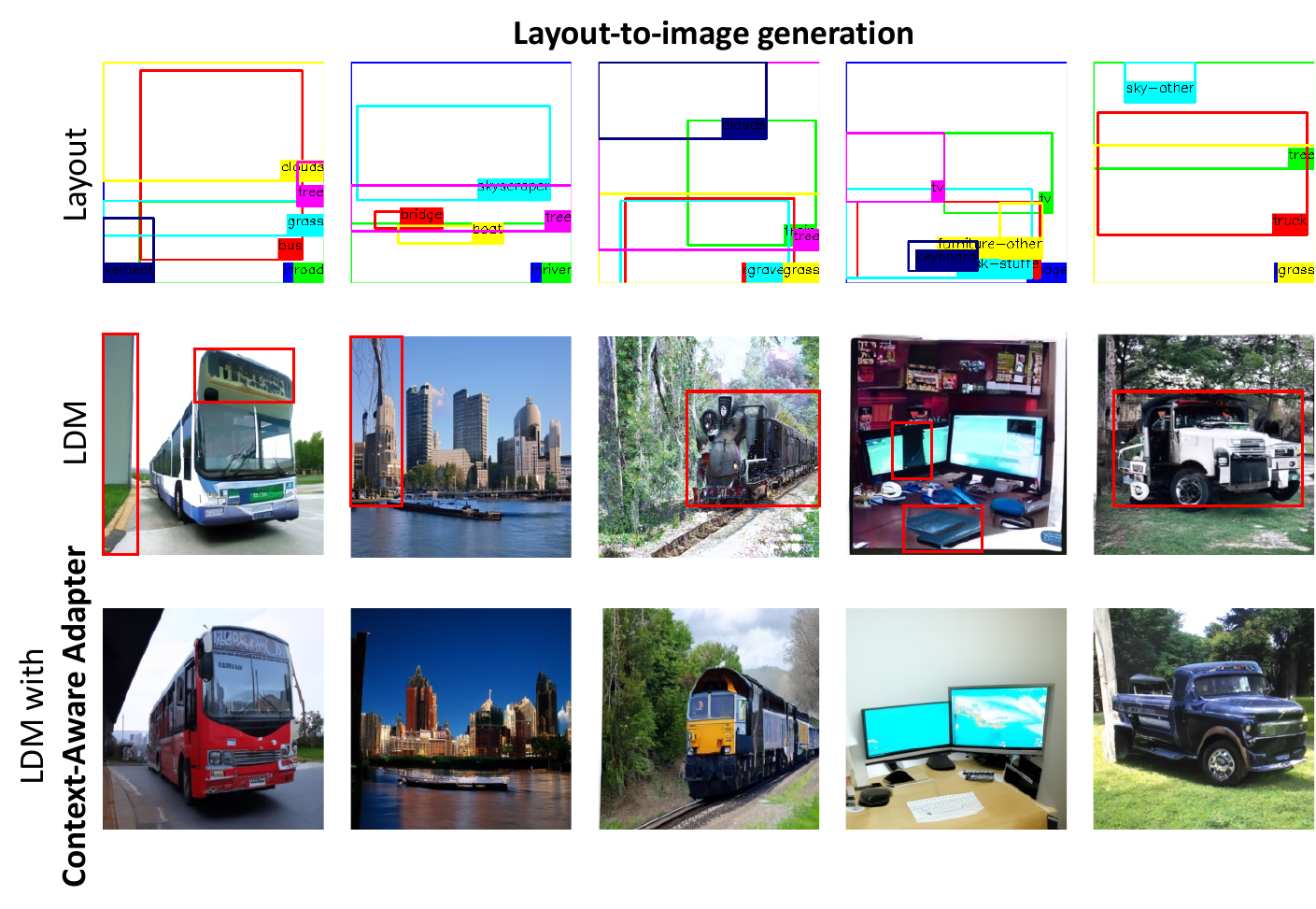}
\caption{Qualitative results in layout-to-image generation on MS-COCO 256$\times$256. Our context-aware adapter improves both fidelity and precision of the generation results of LDM. We use \textcolor{red}{red} boxes to highlight critical fine-grained parts where original LDM fails to align with conditional layout. Our method substantially improves both quality and precision of the generation results.}
\label{app-layout-to-image}
\end{figure*}

\subsection{Model Analysis}

\paragraph{Visual Analysis on Context Awareness of Our Model}
We conduct visual analysis to investigate how our context-aware adapter works in text-guided visual synthesis. As illustrated in \cref{app-video-heatmap-flower} and \cref{app-video-heatmap-snow}, we visualize the heatmaps of text-image cross-attention module in the sampling process of each frame image. We find that our context-aware adapter can enable the model to better focus on the fine-grained semantics in text prompt and sufficiently convey them in final generation results. Because incorporating textual information into diffusion process of the image benefits the cross-modal understanding for image diffusion models.
\begin{figure*}[ht]
\vspace{-4mm}
\centering
\includegraphics[width=1\textwidth]{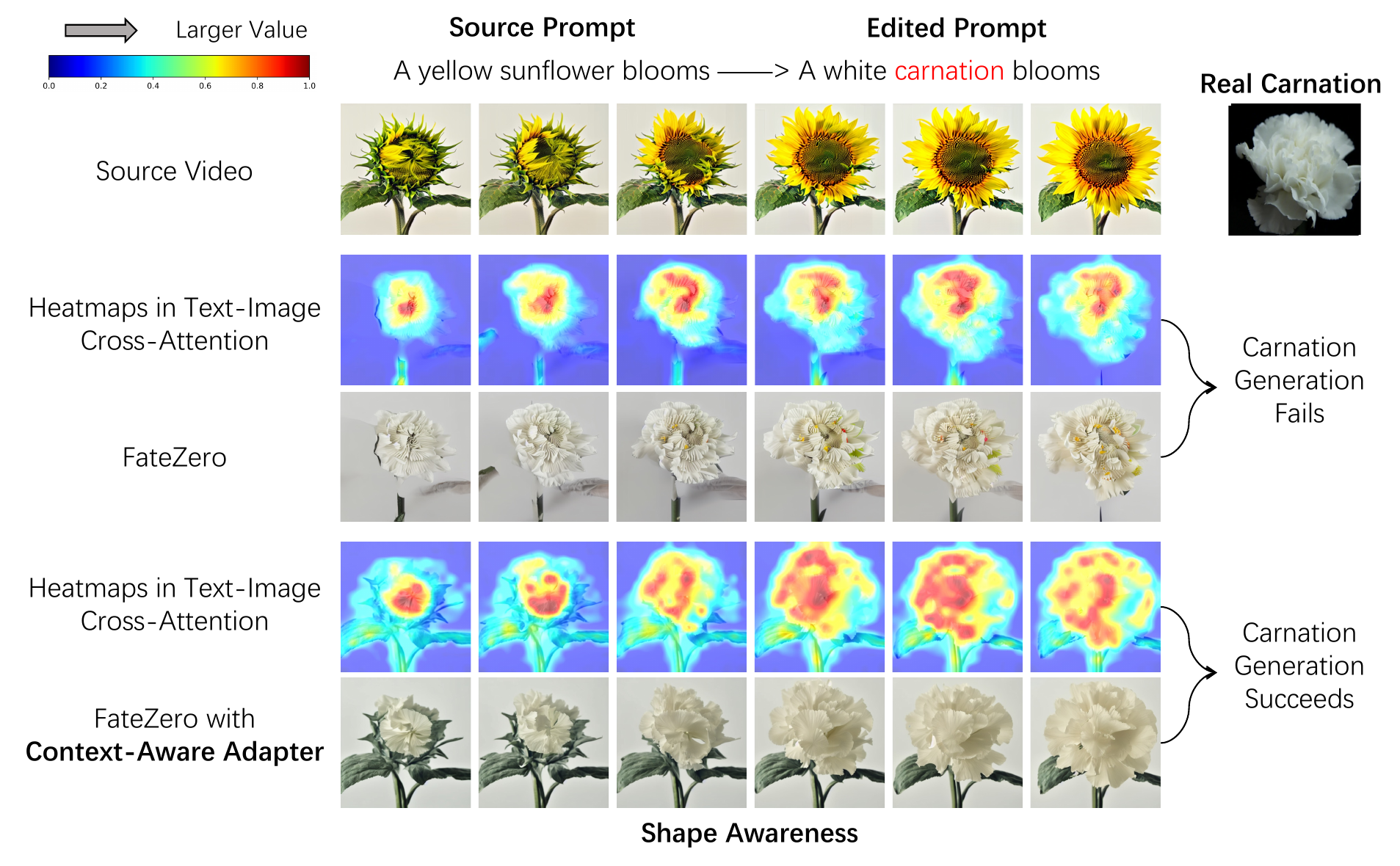}
\vspace{-8mm}
\caption{Our context-aware adapter improves the shape awareness of diffusion models in text-guided video editing.}
\label{app-video-heatmap-flower}
\vspace{-6mm}
\end{figure*}
\begin{figure*}[ht]
\vspace{-4mm}
\centering
\includegraphics[width=1\textwidth]{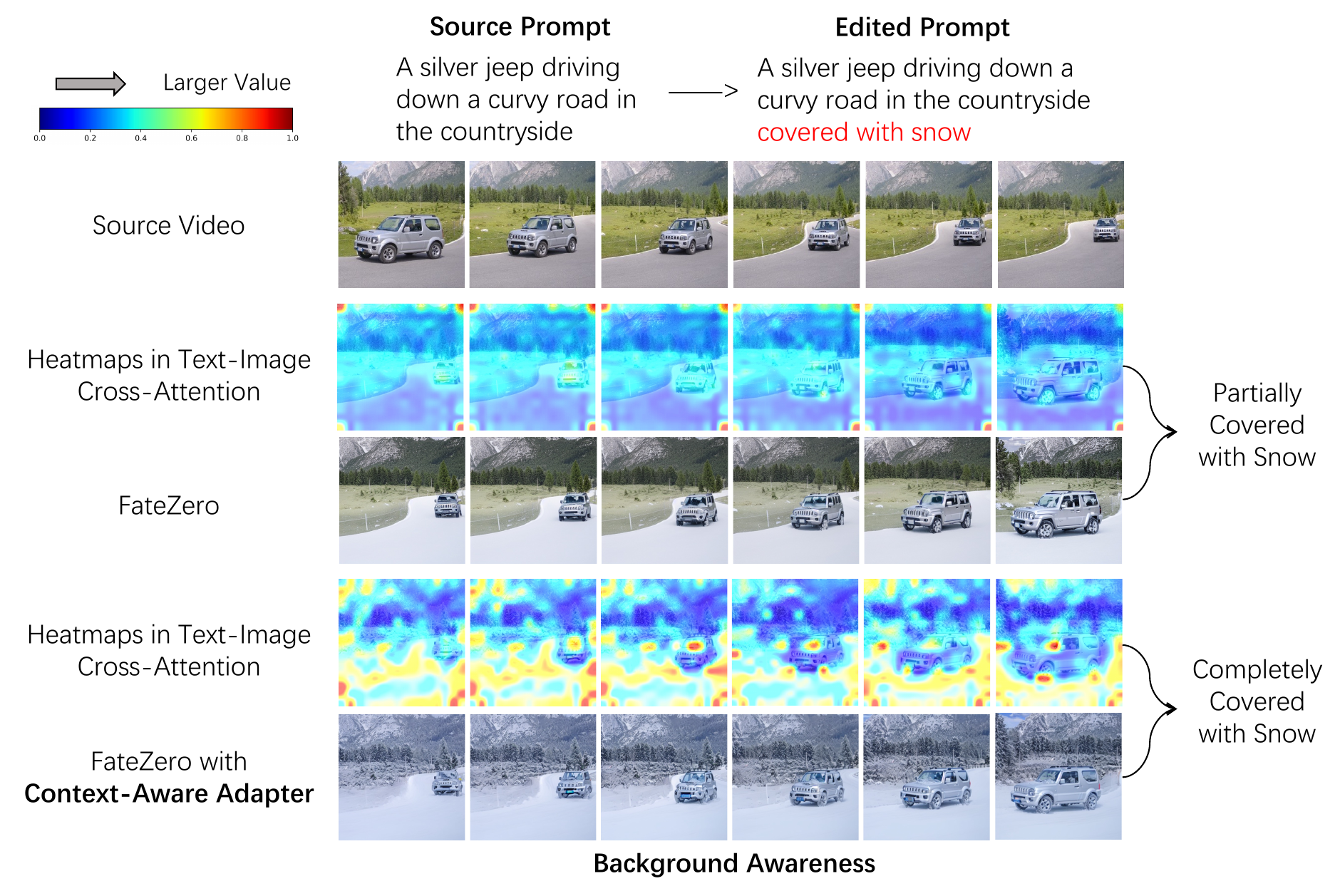}
\caption{Our context-aware adapter can improve the background awareness of diffusion models in text-guided video editing.}
\label{app-video-heatmap-snow}
\end{figure*}

\paragraph{Guidance Scale \textit{vs}.\ FID} Given
the significance of classifier-free guidance weight in controlling image quality and text alignment, in \cref{fig:tradeoff}, we
conduct ablation study on the trade-off between CLIP and FID scores across a range of guidance
weights, specifically 1.5, 3.0, 4.5, 6.0, 7.5, and 9.0. The results indicate that our context-aware adapter contribute effectively. At the same guidance weight, our context-aware adapter considerably and consistently reduces the FID, resulting in
a significant improvement in image quality.
\vskip -0.1in
\paragraph{Training Convergence} We evaluate \method regarding our contribution to the model convergence. The comparison in \cref{fig:convergence} demonstrates that our context-aware adapter can significantly accelerate the training convergence and improve the semantic alignment between text and generated video. This observation also reveals the generalization ability of our contextualized diffusion. 

\begin{figure*}[ht!]
 \begin{minipage}[t]{0.4\textwidth}
 \centering
  \captionsetup{width=.99\linewidth}
  \includegraphics[width=0.99\textwidth]{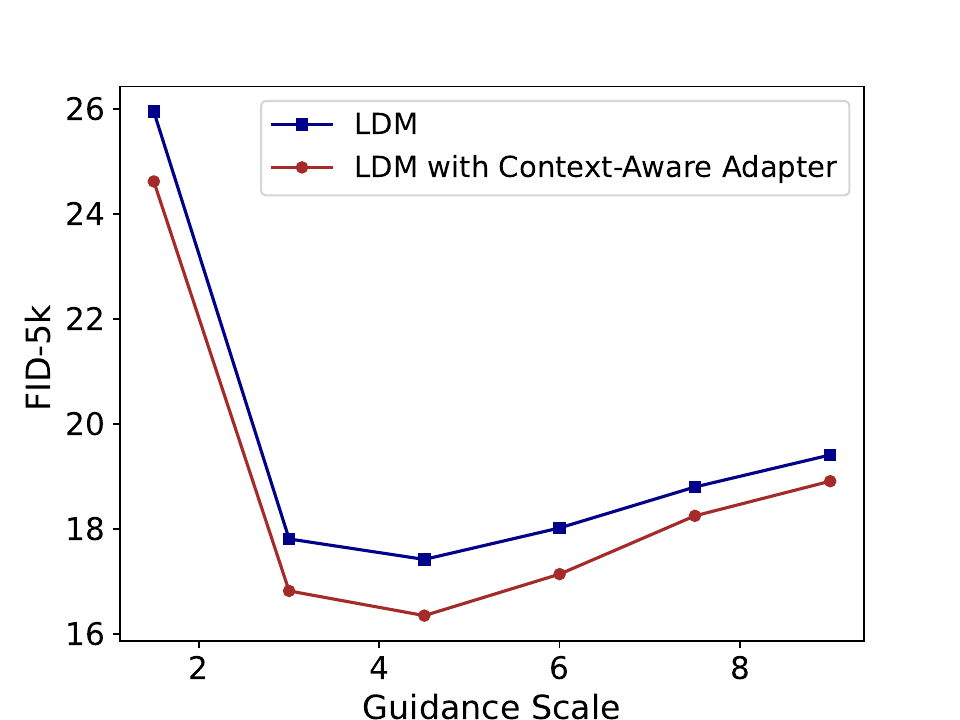}
  \caption{The trade-off between FID and CLIP scores for LDM and LDM with our context-aware adapter.}
  \label{fig:tradeoff}
 \end{minipage}
 \begin{minipage}[t]{0.6\textwidth}
 \centering
  \captionsetup{width=.75\linewidth}
  \includegraphics[width=8.5cm]{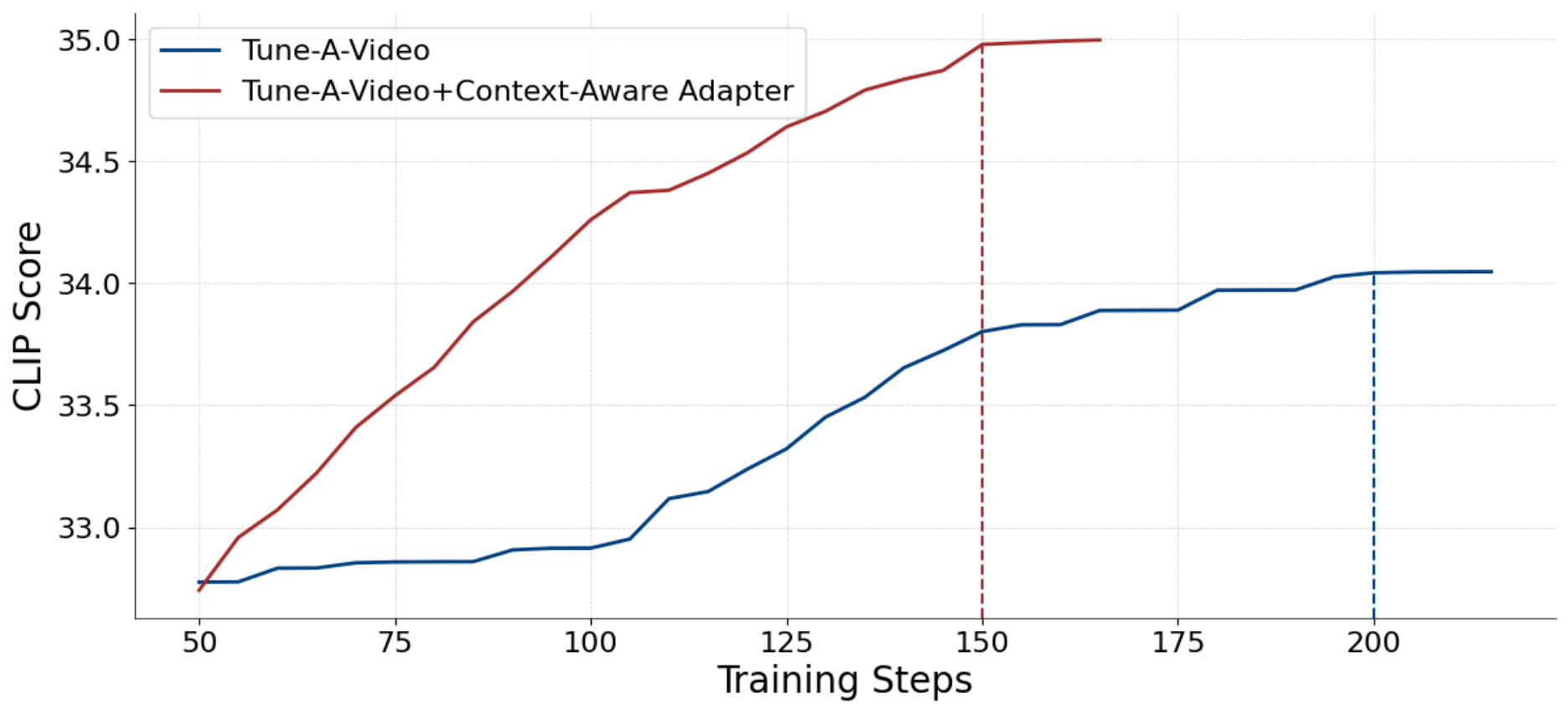}
  \caption{The comparison of model convergence between Tune-A-Video and Tune-A-Video + our context-aware adapter.}
\label{fig:convergence}
 \end{minipage}
\vskip -0.2in
\end{figure*}
\section{Conclusion}
In this paper, we propose a novel and general conditional diffusion model (\method) by propagating cross-modal context to all timesteps in both diffusion and reverse processes, and adapt their trajectories for facilitating the model capacity of cross-modal synthesis. 
We generalize our contextualized trajectory adapter to DDPMs and DDIMs with theoretical derivation, and consistently achieve state-of-the-art performance in two challenging tasks: text-to-image generation, and text-to-video editing. Extensive quantitative and qualitative results on the two tasks demonstrate the effectiveness and superiority of our proposed cross-modal contextualized diffusion models.

\bibliography{iclr2024_conference}
\bibliographystyle{iclr2024_conference}

\newpage
\appendix

\section{Theoretical Derivations}
\subsection{The Distributions in The Forward Process}\label{prof_process}
First, we derive the explicit expressions for $q(\vx_t|\vx_{t-1},\vx_0,\vc)$ and $q(\vx_{t-1}|\vx_t,\vx_0,\vc)$, based on our cross-modal contextualized diffusion defined by \cref{eq-ddpm-forward1}.
\begin{lemma}
    For the forward process $q(\vx_1,\vx_2,...,\vx_T|\vx_0,\vc) = \prod_{t=1}^T q(\vx_t|\vx_{t-1},\vx_0,\vc)$, if the transition kernel $q(\vx_t|\vx_{t-1},\vx_0,\vc)$ is defined as \cref{eq-ddpm-forward2}, then the conditional distribution $q(\vx_t|\vx_0,\vc)$ has the desired distribution as \cref{eq-ddpm-forward1}$, i.e., \mathcal{N}(\vx_t,\sqrt{\Bar{\alpha}_t}\vx_0+\textcolor{brown}{k_t\vr_\phi(\vx_0,\vc,t)},(1-\Bar{\alpha}_t)\mI)$.
\end{lemma}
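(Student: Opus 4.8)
The plan is to proceed by induction on $t$, using the re-parameterization trick to track how the Gaussian mean and covariance evolve under one application of the transition kernel \cref{eq-ddpm-forward2}. Writing $\vb_t$ as shorthand for $k_t\vr_\phi(\vx_0,\vc,t)$ throughout (with the natural convention $\vb_0=\vzero$, consistent with $\bar{\alpha}_0=1$ and the marginal at $t=0$ collapsing to the point mass at $\vx_0$), the claimed marginal \cref{eq-ddpm-forward1} reads $\mathcal{N}(\sqrt{\bar{\alpha}_t}\vx_0+\vb_t,(1-\bar{\alpha}_t)\mI)$, so the induction hypothesis at step $t-1$ is $q(\vx_{t-1}|\vx_0,\vc)=\mathcal{N}(\sqrt{\bar{\alpha}_{t-1}}\vx_0+\vb_{t-1},(1-\bar{\alpha}_{t-1})\mI)$.

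For the base case $t=1$, since $\vx_0$ is held fixed the marginal coincides with the transition kernel $q(\vx_1|\vx_0,\vx_0,\vc)$; substituting $\vb_0=\vzero$ into \cref{eq-ddpm-forward2} and using $\bar{\alpha}_1=\alpha_1$ and $\beta_1=1-\alpha_1$ returns exactly $\mathcal{N}(\sqrt{\bar{\alpha}_1}\vx_0+\vb_1,(1-\bar{\alpha}_1)\mI)$, so the base case is immediate.

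For the inductive step I would write $\vx_{t-1}=\sqrt{\bar{\alpha}_{t-1}}\vx_0+\vb_{t-1}+\sqrt{1-\bar{\alpha}_{t-1}}\,\bm{\epsilon}_1$ from the hypothesis, and, reading off the kernel \cref{eq-ddpm-forward2}, $\vx_t=\sqrt{\alpha_t}\vx_{t-1}+\vb_t-\sqrt{\alpha_t}\vb_{t-1}+\sqrt{\beta_t}\,\bm{\epsilon}_2$ with $\bm{\epsilon}_1,\bm{\epsilon}_2\sim\mathcal{N}(\vzero,\mI)$ independent. Substituting the first into the second, the crucial algebraic point is that the drift $\sqrt{\alpha_t}\vb_{t-1}$ produced by $\vx_{t-1}$ cancels exactly against the $-\sqrt{\alpha_t}\vb_{t-1}$ term deliberately inserted in the kernel, leaving the mean $\sqrt{\alpha_t\bar{\alpha}_{t-1}}\vx_0+\vb_t=\sqrt{\bar{\alpha}_t}\vx_0+\vb_t$, as required.

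It then remains to merge the two independent noise contributions $\sqrt{\alpha_t(1-\bar{\alpha}_{t-1})}\,\bm{\epsilon}_1$ and $\sqrt{\beta_t}\,\bm{\epsilon}_2$: their sum is again an isotropic Gaussian, now with variance $\alpha_t(1-\bar{\alpha}_{t-1})+(1-\alpha_t)=1-\bar{\alpha}_t$, which is precisely the target covariance. This closes the induction and yields \cref{eq-ddpm-forward1}. The only genuine content is the telescoping cancellation of the $\vb_{t-1}$ drift terms; once that is recognized, the argument reduces to the standard DDPM combination of two Gaussian increments, so I expect the main obstacle to be careful bookkeeping of the $k_t$-weighted bias terms rather than any deeper difficulty.
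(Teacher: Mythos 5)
Your proof is correct and follows essentially the same route as the paper's: induction on $t$, writing both the inductive hypothesis and the transition kernel via the re-parameterization trick, observing that the $\sqrt{\alpha_t}\vb_{t-1}$ drift cancels against the term built into the kernel, and summing the two independent Gaussian noises to get variance $\alpha_t(1-\bar{\alpha}_{t-1})+\beta_t=1-\bar{\alpha}_t$. The only difference is that you spell out the base case $t=1$ explicitly (with the $\vb_0=\vzero$ convention), which the paper leaves implicit.
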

\begin{proof}
    We prove the lemma by induction. Suppose at time t, we have $q(\vx_t|\vx_{t-1},\vx_0,\vc)$ and $q(\vx_{t-1}|\vx_0,\vc)$ admit the desired distributions as in \cref{eq-ddpm-forward1,eq-ddpm-forward2}, respectively, then we need to prove that $q(\vx_t|\vx_0,\vc)=\mathcal{N}(\vx_t,\sqrt{\Bar{\alpha}_t}\vx_0+\textcolor{brown}{k_t\vr_\phi(\vx_0,\vc,t)},(1-\Bar{\alpha}_t)\mI)$. We can re-write he conditional distributions of $\vx_t$ given $(\vx_{t-1},\vx_0,\vc)$ and $\vx_{t-1}$ given $(\vx_0,\vc)$ with the following equations: 
    \begin{equation}
        \begin{aligned}
                    \vx_t &= \sqrt{\alpha_t}\vx_{t-1}+\textcolor{brown}{k_t\vr_\phi(\vx_0,\vc,t)}-\sqrt{\alpha_{t}}\textcolor{brown}{k_{t-1}\vr_\phi(\vx_0,\vc,t-1)}+\sqrt{\beta_t}\epsilon_1,\label{eq-condition_dst_1}  \\
        \end{aligned}
    \end{equation}
    \begin{equation}                    
        \vx_{t-1} = \sqrt{\Bar{\alpha}_{t-1}}\vx_0+\textcolor{brown}{k_{t-1}\vr_\phi(\vx_0,\vc,t-1)}+\sqrt{1-\Bar{\alpha}_{t-1}}\epsilon_2 ,\label{eq-condition_dst_2} 
    \end{equation}
    where $\epsilon_1,\epsilon_2$ are two independent standard gaussian random variables. Replacing $\vx_{t-1}$ in \cref{eq-condition_dst_1} with \cref{eq-condition_dst_2}, we have:
    \begin{equation}
    \begin{aligned}
         \vx_t &= \sqrt{\Bar{\alpha}_{t}}\vx_0+\textcolor{brown}{k_{t}\vr_\phi(\vx_0,\vc,t)}\\
         &+\sqrt{\alpha_{t}}\textcolor{brown}{k_{t-1}\vr_\phi(\vx_0,\vc,t-1)} - \sqrt{\alpha_{t}}\textcolor{brown}{k_{t-1}\vr_\phi(\vx_0,\vc,t-1)}\\
         & + \sqrt{\beta_t}\epsilon_1+\sqrt{\alpha_t*(1-\Bar{\alpha}_{t-1})}*\epsilon_2\\
         & = \sqrt{\Bar{\alpha}_{t}}\vx_0+\textcolor{brown}{k_{t-1}\vr_\phi(\vx_0,\vc,t-1)}+ \sqrt{\beta_t}\epsilon_1+\sqrt{\alpha_t*(1-\Bar{\alpha}_{t-1})}*\epsilon_2\\
    \end{aligned}
    \end{equation}
    As a result, the distribution of $\vx_t$ given $(\vx_0,\vc)$ is a gaussian distribution with mean $\sqrt{\Bar{\alpha}_{t}}\vx_0+\textcolor{brown}{k_{t}\vr_\phi(\vx_0,\vc,t)}$ and  variance $\alpha_t*(1-\Bar{\alpha}_{t-1})+\beta_t = 1-\Bar{\alpha}_{t}$, which admits the desired distribution.
\end{proof}
\begin{proposition}
    Suppose the distribution of forward process is defined by \cref{eq-ddpm-forward1,eq-ddpm-forward2}, then at each time t, the posterior distribution $q(\vx_{t-1}|\vx_t,\vx_0,\vc)$ is described by \cref{eq-ddpm-backward}
\end{proposition}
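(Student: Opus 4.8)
The plan is to obtain the posterior directly from Bayes' rule. By the preceding lemma together with the definitions, all three relevant conditionals are Gaussian: the transition kernel $q_\phi(\vx_t|\vx_{t-1},\vx_0,\vc)$ from \cref{eq-ddpm-forward2}, the marginal $q_\phi(\vx_{t-1}|\vx_0,\vc)$ which is the $(t-1)$-instance of \cref{eq-ddpm-forward1}, and the marginal $q_\phi(\vx_t|\vx_0,\vc)$ from \cref{eq-ddpm-forward1}. Writing
\[
q_\phi(\vx_{t-1}|\vx_t,\vx_0,\vc) = \frac{q_\phi(\vx_t|\vx_{t-1},\vx_0,\vc)\,q_\phi(\vx_{t-1}|\vx_0,\vc)}{q_\phi(\vx_t|\vx_0,\vc)},
\]
the denominator is constant in $\vx_{t-1}$, so the posterior is proportional to a product of two Gaussians in $\vx_{t-1}$ and is therefore Gaussian. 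It then suffices to read off its mean and covariance by completing the square in the exponent.

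First I would expand the two quadratic forms $\beta_t^{-1}\|\vx_t-\sqrt{\alpha_t}\vx_{t-1}-\vb_t+\sqrt{\alpha_t}\vb_{t-1}\|^2$ and $(1-\bar\alpha_{t-1})^{-1}\|\vx_{t-1}-\sqrt{\bar\alpha_{t-1}}\vx_0-\vb_{t-1}\|^2$, where $\vb_t=k_t\vr_\phi(\vx_0,\vc,t)$. Collecting the coefficient of $\|\vx_{t-1}\|^2$ gives the precision $\alpha_t/\beta_t+1/(1-\bar\alpha_{t-1})$; applying the identity $\alpha_t(1-\bar\alpha_{t-1})+\beta_t=1-\bar\alpha_t$ (the same one used in the lemma) collapses this to $(1-\bar\alpha_t)/[\beta_t(1-\bar\alpha_{t-1})]$, whose reciprocal is the claimed variance $\frac{(1-\bar\alpha_{t-1})\beta_t}{1-\bar\alpha_t}\mI$. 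The mean is then this variance times the linear coefficient $\sqrt{\alpha_t}\beta_t^{-1}(\vx_t-\vb_t+\sqrt{\alpha_t}\vb_{t-1})+(1-\bar\alpha_{t-1})^{-1}(\sqrt{\bar\alpha_{t-1}}\vx_0+\vb_{t-1})$, and the $\vx_0$- and $\vx_t$-dependent pieces reproduce the standard DDPM coefficients $\frac{\sqrt{\bar\alpha_{t-1}}\beta_t}{1-\bar\alpha_t}$ and $\frac{\sqrt{\alpha_t}(1-\bar\alpha_{t-1})}{1-\bar\alpha_t}$.

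The nonroutine step, and the main obstacle, is tracking the two bias terms so that they assemble into the advertised clean form of \cref{eq-ddpm-backward}, namely with $-\vb_t$ paired inside the $\vx_t$ bracket and a bare $+\vb_{t-1}$ added outside. The $-\vb_t$ appears automatically because $\vb_t$ enters the transition-kernel mean with exactly the same coefficient as $\vx_t$. The delicate part is the $\vb_{t-1}$ contribution: after multiplying out, its total coefficient is $\frac{\alpha_t(1-\bar\alpha_{t-1})+\beta_t}{1-\bar\alpha_t}$, which by the same identity equals exactly $1$, so the $\sqrt{\alpha_t}\vb_{t-1}$ carried by the transition kernel and the $\vb_{t-1}$ from the $\vx_{t-1}$ marginal merge into the single additive term $\vb_{t-1}(\vx_0,\vc)$. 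Once this cancellation is verified, matching the resulting mean and variance against \cref{eq-ddpm-backward} is immediate.
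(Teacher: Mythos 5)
Your proposal is correct and follows essentially the same route as the paper's proof: Bayes' rule on the three Gaussian conditionals, completing the square in $\vx_{t-1}$, and the identity $\alpha_t(1-\Bar{\alpha}_{t-1})+\beta_t = 1-\Bar{\alpha}_t$ to collapse the precision and to verify that the $\vb_{t-1}$ contributions merge with coefficient exactly $1$. The only difference is presentational: you make explicit the bookkeeping of the bias terms that the paper dispatches with ``some algebraic derivation.''
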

\begin{proof}
    By the Bayes rule, $q(\vx_{t-1}|\vx_t,\vx_0,\vc)=\frac{q(\vx_{t-1}|\vx_0,\vc)q(\vx_t|\vx_{t-1},\vx_0,\vc)}{q(\vx_t|\vx_0,\vc)}$. By \cref{eq-ddpm-forward1,eq-ddpm-forward2}, the numerator and denominator are both gaussian , then the posterior distribution is also gaussian and we can proceed to calculate its mean and variance:
    \begin{equation}
        \begin{aligned}
            q(\vx_{t-1}|\vx_t,\vx_0,\vc) &= \frac{\mathcal{N}(\vx_{t-1},\sqrt{\Bar{\alpha}_{t-1}}\vx_0+\textcolor{brown}{\vb_{t-1}(\vx_0,\vc)},(1-\Bar{\alpha}_{t-1})\mI)}{\mathcal{N}(\vx_t,\sqrt{\Bar{\alpha}_t}\vx_0+\textcolor{brown}{\vb_t(\vx_0,\vc)},(1-\Bar{\alpha}_t)\mI)}\\
            &\quad *\mathcal{N}(\vx_t,\sqrt{\alpha_t}\vx_{t-1}+\textcolor{brown}{\vb_t(\vx_0,\vc)}-\sqrt{\alpha_{t}}\textcolor{brown}{\vb_{t-1}(\vx_0,\vc)},\beta_t\mI)\\
        \end{aligned},
    \end{equation}
    where  $\textcolor{brown}{\vb_t(\vx_0,\vc)}$ is an abbreviation form of $k_t\vr_\phi(\vx_0,\vc,t)$. Dropping the constants which are unrelated to $\vx_0,\vx_t,\vx_{t-1}$ and $\vc$, we have:
    \begin{equation}
        \begin{aligned}
            q(\vx_{t-1}|\vx_t,\vx_0,\vc) &\propto exp\left\{-\frac{(\vx_{t-1}-\sqrt{\Bar{\alpha}_{t-1}}\vx_0-\textcolor{brown}{\vb_{t-1}(\vx_0,\vc)})^2}{2(1-\Bar{\alpha}_{t-1})}+ \frac{(\vx_t-\sqrt{\Bar{\alpha}_t}\vx_0-
            \textcolor{brown}{\vb_t(\vx_0,\vc)})^2}{2(1-\Bar{\alpha}_t)}\right.\\
            &\quad - \left. \frac{(\vx_t-\sqrt{\alpha_t}\vx_{t-1}-\textcolor{brown}{\vb_t(\vx_0,\vc)}+\sqrt{\alpha_{t}}\textcolor{brown}{\vb_{t-1}(\vx_0,\vc)})^2}{2\beta_t}\right\}\\
            & = exp\left\{C(\vx_0,\vx_t,\vc)-\frac{1}{2}(\frac{1}{1-\Bar{\alpha}_{t-1}}+\frac{\alpha_t}{\beta_t})*\vx_{t-1}^2 +\vx_{t-1}*\right.\\
            &\left. [\frac{(\sqrt{\Bar{\alpha}_{t-1}}\vx_0+\textcolor{brown}{\vb_{t-1}(\vx_0,\vc)})}{1-\Bar{\alpha}_{t-1}}+\sqrt{\alpha_t}\frac{(\vx_t-\textcolor{brown}{\vb_t(\vx_0,\vc)}+\sqrt{\alpha_{t}}\textcolor{brown}{\vb_{t-1}(\vx_0,\vc)})}{\beta_t}]\right\}\\
            & = exp\left\{C(\vx_0,\vx_t,\vc)-\frac{1}{2}(\frac{1}{1-\Bar{\alpha}_{t-1}}+\frac{\alpha_t}{\beta_t})*\vx_{t-1}^2 +\vx_{t-1}*\right.\\
            &\left. [\frac{(\sqrt{\Bar{\alpha}_{t-1}}}{1-\Bar{\alpha}_{t-1}}\vx_0+\frac{\sqrt{\alpha_{t}}}{\beta_t}(\vx_t-\textcolor{brown}{\vb_t(\vx_0,\vc)})+(\frac{1}{1-\Bar{\alpha}_{t-1}}+\frac{\alpha_t}{\beta_t})*\textcolor{brown}{\vb_{t-1}(\vx_0,\vc)}]\right\},
        \end{aligned}
    \end{equation}
    where $C(\vx_0,\vx_t,\vc)$ is a constant term with respect to $\vx_{t-1}$. Note that $(\frac{1}{1-\Bar{\alpha}_{t-1}}+\frac{\alpha_t}{\beta_t}) = \frac{1-\Bar{\alpha}_{t}}{(1-\Bar{\alpha}_{t-1})(1-\alpha_t)}$, and with some algebraic derivation, we can show that the gaussian distribution $q(\vx_{t-1}|\vx_t,\vx_0,\vc)$ has:
    \begin{equation}
        \begin{aligned}
            variance: &\frac{(1-\Bar{\alpha}_{t-1})(1-\alpha_t)}{1-\Bar{\alpha}_{t}}\mI\\
            mean: &\frac{\sqrt{\Bar{\alpha}_{t-1}}\beta_t}{1-\Bar{\alpha}_t}\vx_0+\frac{\sqrt{\alpha_t}(1-\Bar{\alpha}_{t-1})}{1-\Bar{\alpha_t}}(\vx_t-\textcolor{brown}{\vb_t(\vx_0,\vc)})+\textcolor{brown}{\vb_{t-1}(\vx_0,\vc)}
        \end{aligned}
    \end{equation}
\end{proof}
Similarly, we can derive the distribution of DDIMs.
\begin{lemma}
    Suppose that at each time t, the posterior distribution is defined by a gaussin distribution with 
    \begin{equation}\label{ddim_posterior}
        \begin{aligned}
            &Mean: \sqrt{\bar{\alpha}_{t-1}}\vx_0+\sqrt{1-\bar{\alpha}_{t-1}-\sigma^2_t}*\frac{\vx_t-\sqrt{\bar{\alpha}_t}\vx_0}{\sqrt{1-\bar{\alpha}_t}} \\
            &\quad\quad -k_t\vr_\phi(\vx_0,\vc,t)*\frac{\sqrt{1-\bar{\alpha}_{t-1}-\sigma^2_t}}{\sqrt{1-\bar{\alpha}_t}}+k_{t-1}\vr_\phi(\vx_0,\vc,t-1)\\
            &Variance: \sigma_t^2\mI,
        \end{aligned}
    \end{equation}
    then the marginal distribution $q_\phi(\vx_t|\vx_0,\vc)$ has the desired distribution as \cref{eq-ddpm-forward1}
\end{lemma}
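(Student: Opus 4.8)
The plan is to prove the claim by downward induction on $t$, mirroring the induction used for the DDPM forward process earlier but now \emph{marginalizing} the DDIM posterior \cref{ddim_posterior} instead of composing one-step kernels. Throughout I abbreviate $\vb_t := k_t\vr_\phi(\vx_0,\vc,t)$, as in the DDPM derivation. The base case is $t=T$, where $q_\phi(\vx_T|\vx_0,\vc)$ is taken to be $\mathcal{N}(\sqrt{\bar\alpha_T}\vx_0+\vb_T,(1-\bar\alpha_T)\mI)$ by definition, i.e.\ exactly \cref{eq-ddpm-forward1} at $t=T$.

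For the inductive step I assume $q_\phi(\vx_t|\vx_0,\vc)=\mathcal{N}(\sqrt{\bar\alpha_t}\vx_0+\vb_t,(1-\bar\alpha_t)\mI)$ and evaluate the marginal $q_\phi(\vx_{t-1}|\vx_0,\vc)=\int q(\vx_{t-1}|\vx_t,\vx_0,\vc)\,q_\phi(\vx_t|\vx_0,\vc)\,d\vx_t$. Because the posterior mean in \cref{ddim_posterior} is affine in $\vx_t$ and both factors are Gaussian, the marginal is Gaussian, so it suffices to track its mean and variance. Rather than integrating densities, I would use the reparameterization trick: write $\vx_t=\sqrt{\bar\alpha_t}\vx_0+\vb_t+\sqrt{1-\bar\alpha_t}\,\epsilon$ with $\epsilon\sim\mathcal{N}(0,\mI)$, draw $\vx_{t-1}$ as the mean of \cref{ddim_posterior} plus $\sigma_t\epsilon'$ with $\epsilon'\sim\mathcal{N}(0,\mI)$ independent of $\epsilon$, and substitute the first expression into the second.

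The key simplification is that $\frac{\vx_t-\sqrt{\bar\alpha_t}\vx_0}{\sqrt{1-\bar\alpha_t}}=\frac{\vb_t}{\sqrt{1-\bar\alpha_t}}+\epsilon$, so the term $\sqrt{1-\bar\alpha_{t-1}-\sigma_t^2}\cdot\frac{\vx_t-\sqrt{\bar\alpha_t}\vx_0}{\sqrt{1-\bar\alpha_t}}$ contributes $\vb_t\cdot\frac{\sqrt{1-\bar\alpha_{t-1}-\sigma_t^2}}{\sqrt{1-\bar\alpha_t}}$ to the deterministic part. This cancels exactly against the $-\vb_t\cdot\frac{\sqrt{1-\bar\alpha_{t-1}-\sigma_t^2}}{\sqrt{1-\bar\alpha_t}}$ correction built into the posterior mean, leaving the deterministic part equal to $\sqrt{\bar\alpha_{t-1}}\vx_0+\vb_{t-1}$. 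The stochastic part is $\sqrt{1-\bar\alpha_{t-1}-\sigma_t^2}\,\epsilon+\sigma_t\epsilon'$, which by independence is a single Gaussian of variance $(1-\bar\alpha_{t-1}-\sigma_t^2)+\sigma_t^2=1-\bar\alpha_{t-1}$. Hence $q_\phi(\vx_{t-1}|\vx_0,\vc)=\mathcal{N}(\sqrt{\bar\alpha_{t-1}}\vx_0+\vb_{t-1},(1-\bar\alpha_{t-1})\mI)$, which is \cref{eq-ddpm-forward1} at index $t-1$ and closes the induction.

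The algebra is routine; the one point needing care, and the step I regard as the crux, is confirming that the correction term in the posterior mean is tuned so that the $\vb_t$ contributions cancel and only $\vb_{t-1}$ survives. This is precisely the replacement of $k_t\vr_\phi(\vx_0,\vc,t)\cdot\frac{\sqrt{1-\bar\alpha_{t-1}-\sigma_t^2}}{\sqrt{1-\bar\alpha_t}}$ by $k_{t-1}\vr_\phi(\vx_0,\vc,t-1)$ noted in the main text, and the induction makes it exact. The deterministic $\sigma_t^2=0$ case used for fast sampling is then recovered by direct specialization, consistent with the stated DDIM sampling updates.
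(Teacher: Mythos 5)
Your proposal is correct and matches the paper's own argument essentially step for step: a downward induction from $t=T$, reparameterizing $\vx_t=\sqrt{\bar\alpha_t}\vx_0+\vb_t+\sqrt{1-\bar\alpha_t}\epsilon$ and substituting into the posterior mean so that the $\vb_t\cdot\sqrt{1-\bar\alpha_{t-1}-\sigma_t^2}/\sqrt{1-\bar\alpha_t}$ contributions cancel, leaving $\sqrt{\bar\alpha_{t-1}}\vx_0+\vb_{t-1}$ with noise variance $(1-\bar\alpha_{t-1}-\sigma_t^2)+\sigma_t^2=1-\bar\alpha_{t-1}$. No substantive differences to report.
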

\begin{proof}
    We prove by induction. Suppose that at time t, posterior and marginal distributions admit the desired distributions, then we need to prove that at time $t-1$, $q_\phi(\vx_{t-1}|\vx_0,\vc)$ also has the desired distribution. Rewrite the posterior and marginal distribution as the following:
    \begin{equation}
        \begin{aligned}
            \vx_{t-1} = &\sqrt{\bar{\alpha}_{t-1}}\vx_0+\sqrt{1-\bar{\alpha}_{t-1}-\sigma^2_t}*\frac{\vx_t-\sqrt{\bar{\alpha}_t}\vx_0}{\sqrt{1-\bar{\alpha}_t}} \\&-k_t\vr_\phi(\vx_0,\vc,t)*\frac{\sqrt{1-\bar{\alpha}_{t-1}-\sigma^2_t}}{\sqrt{1-\bar{\alpha}_t}}+k_{t-1}\vr_\phi(\vx_0,\vc,t-1))+\sigma_t \epsilon_1\\
        \end{aligned}
    \end{equation}
    \begin{equation}                    
        \vx_{t} = \sqrt{\Bar{\alpha}_{t}}\vx_0+k_{t}\vr_\phi(\vx_0,\vc,t)+\sqrt{1-\Bar{\alpha}_{t}}\epsilon_2 , 
    \end{equation}
    where $\epsilon_1,\epsilon_2$ are standard gaussian noises. Plugging in $\vx_{t}$, we have:
    \begin{equation}
        \begin{aligned}
            \vx_{t-1} &= \sqrt{\bar{\alpha}_{t-1}}\vx_0\\
            &+k_t\vr_\phi(\vx_0,\vc,t)*\frac{\sqrt{1-\bar{\alpha}_{t-1}-\sigma^2_t}}{\sqrt{1-\bar{\alpha}_t}}-k_t\vr_\phi(\vx_0,\vc,t)*\frac{\sqrt{1-\bar{\alpha}_{t-1}-\sigma^2_t}}{\sqrt{1-\bar{\alpha}_t}}\\
            &+k_{t-1}\vr_\phi(\vx_0,\vc,t-1))+\sigma_t \epsilon_1+\sqrt{1-\bar{\alpha}_{t-1}-\sigma^2_t}\epsilon_2\\
            &=\sqrt{\bar{\alpha}_{t-1}}\vx_0+k_{t-1}\vr_\phi(\vx_0,\vc,t-1))+\sigma_t \epsilon_1+\sqrt{1-\bar{\alpha}_{t-1}-\sigma^2_t}\epsilon_2
        \end{aligned}
    \end{equation}
    Since the variance of $\sigma_t \epsilon_1+\sqrt{1-\bar{\alpha}_{t-1}-\sigma^2_t}\epsilon_2$ is $(1-\bar{\alpha}_{t-1})\mI$, we have the desired distribution.
\end{proof}
\subsection{Upper Bound of The Likelihood}\label{prof_objective}
Here we show with our parameterization, the objective function $\mathcal{L}_{\theta,\phi}$ \cref{eq_objective_final} is a upper bound of the negative log likelihood of the data distribution.
\begin{lemma}
    Based on the non-Markovian forward process  $q(\vx_1,\vx_2,...,\vx_T|\vx_0,\vc) = \prod_{t=1}^T q(\vx_t|\vx_{t-1},\vx_0,\vc)$ and the conditional reverse process $p_\theta(\vx_0,\vx_1,\vx_2,...,\vx_T|\vc) = p_\theta(\vx_T|\vc)\prod_{t=1}^T p_\theta(\vx_{t-1}|\vx_{t},\vc)$, the objective function \cref{eq-training-ori} is an upper bound of the negative log likelihood.\label{lemma_obj_1}
\end{lemma}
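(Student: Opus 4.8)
The plan is to reproduce the standard variational-bound argument (as in Ho et al.), adapted to the fact that every forward kernel here is conditioned on both $\vx_0$ and $\vc$. The target is the negative log likelihood $-\log p_\theta(\vx_0|\vc)$, and the goal is to bound it above by the per-sample version of \cref{eq-training-ori}; averaging the resulting inequality over $\pdata$ then yields the stated bound for the data distribution.

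First I would introduce the forward process as a variational posterior. Since the KL divergence is non-negative,
\begin{equation}
-\log p_\theta(\vx_0|\vc) \le -\log p_\theta(\vx_0|\vc) + \KL\big(q_\phi(\vx_{1:T}|\vx_0,\vc)\,\Vert\,p_\theta(\vx_{1:T}|\vx_0,\vc)\big).
\end{equation}
Expanding the KL term and using $p_\theta(\vx_{1:T}|\vx_0,\vc) = p_\theta(\vx_{0:T}|\vc)/p_\theta(\vx_0|\vc)$, the two $\log p_\theta(\vx_0|\vc)$ contributions cancel and I am left with the single expectation $\E_{q_\phi}\big[\log \tfrac{q_\phi(\vx_{1:T}|\vx_0,\vc)}{p_\theta(\vx_{0:T}|\vc)}\big]$.

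Next I would substitute the two factorizations — $q_\phi(\vx_{1:T}|\vx_0,\vc)=\prod_t q_\phi(\vx_t|\vx_{t-1},\vx_0,\vc)$ and $p_\theta(\vx_{0:T}|\vc)=p_\theta(\vx_T|\vc)\prod_t p_\theta(\vx_{t-1}|\vx_t,\vc)$ — and rewrite each forward kernel for $t\ge 2$ via Bayes' rule conditioned on $(\vx_0,\vc)$,
\begin{equation}
q_\phi(\vx_t|\vx_{t-1},\vx_0,\vc) = q_\phi(\vx_{t-1}|\vx_t,\vx_0,\vc)\,\frac{q_\phi(\vx_t|\vx_0,\vc)}{q_\phi(\vx_{t-1}|\vx_0,\vc)}.
\end{equation}
This identity is what makes the $\vx_0$-conditioned kernel tractable: it is legitimate because the joint $q_\phi(\vx_t,\vx_{t-1}|\vx_0,\vc)$ factorizes through the marginal established in the first lemma and reverses to the posterior of the preceding proposition. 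The ratio of marginals then telescopes over $t=2,\dots,T$, collapsing to $q_\phi(\vx_T|\vx_0,\vc)/q_\phi(\vx_1|\vx_0,\vc)$, and the surviving $q_\phi(\vx_1|\vx_0,\vc)$ cancels the $t=1$ factor.

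After this cancellation I would split the logarithm into the boundary term $\log\tfrac{q_\phi(\vx_T|\vx_0,\vc)}{p_\theta(\vx_T|\vc)}$, the reconstruction term $-\log p_\theta(\vx_0|\vx_1,\vc)$, and the per-step ratios $\log\tfrac{q_\phi(\vx_{t-1}|\vx_t,\vx_0,\vc)}{p_\theta(\vx_{t-1}|\vx_t,\vc)}$ for $t>1$; taking the expectation under $q_\phi$ turns each piece into the corresponding (expected) KL divergence and recovers \cref{eq-training-ori} verbatim. I expect the only delicate point to be justifying the Bayes step in the presence of the learned context bias $\vr_\phi$: one must verify that conditioning every transition on $(\vx_0,\vc)$ keeps the reversal consistent with the marginal and posterior already derived, so that the telescoping is exact rather than approximate. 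Once that is granted, the remaining steps are bookkeeping, and the nonnegativity of the inserted KL in the first display is precisely what delivers the claimed upper bound.
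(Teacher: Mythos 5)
Your proposal is correct and follows essentially the same route as the paper's proof of \cref{lemma_obj_1}: insert the non-negative KL between $q_\phi(\vx_{1:T}|\vx_0,\vc)$ and $p_\theta(\vx_{1:T}|\vx_0,\vc)$, cancel the $\log p_\theta(\vx_0|\vc)$ terms, apply Bayes' rule to each forward kernel conditioned on $(\vx_0,\vc)$ so the marginal ratios telescope, and regroup into the prior-matching, reconstruction, and per-step KL terms of \cref{eq-training-ori}. The point you flag as delicate is handled in the paper by the preceding lemma and proposition, which establish that the marginals and posteriors of the context-biased forward process are exactly the Gaussians used in the reversal, so the telescoping is indeed exact.
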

\begin{proof}
    \begin{equation}
        \begin{aligned}
            -\log p_\theta(\vx_0|\vc) 
            &\leq -\log p_\theta(\vx_0|\vc)+\E_{q(\vx_{1:T}|\vx_0,\vc)} \left\{-\log\frac{p_\theta(\vx_{1:T}|\vx_0,\vc)}{q(\vx_{1:T}|\vx_0,\vc)}\right\}\\
            &= \E_{q(\vx_{1:T}|\vx_0,\vc)}\left\{ -\log \frac{p_\theta(\vx_{0:T}|\vc)}{q(\vx_{1:T}|\vx_0,\vc)}\right\}\\
            &= -\E_{q(\vx_{1:T}|\vx_0,\vc)}\left\{ \log \frac{p_\theta(\vx_T|\vc)\prod_{t=1}^T p_\theta(\vx_{t-1}|\vx_{t},\vc)}{\prod_{t=1}^T q(\vx_t|\vx_{t-1},\vx_0,\vc)}\right\}\\
            &= -\E_{q(\vx_{1:T}|\vx_0,\vc)}\left\{ \log p_\theta(\vx_T|\vc)+\sum_{t>1}\log \frac{p_\theta(\vx_{t-1}|\vx_{t},\vc)}{q(\vx_t|\vx_{t-1},\vx_0,\vc)}+\log \frac{p_\theta(\vx_0|\vx_1,\vc)}{q(\vx_1|\vx_0,\vc)}\right\}\\
            &= -\E_{q(\vx_{1:T}|\vx_0,\vc)}\left\{\log p_\theta(\vx_T|\vc)+\log \frac{p_\theta(\vx_0|\vx_1,\vc)}{q(\vx_1|\vx_0,\vc)}\right.\\
            &\quad +\left. \sum_{t>1}\log \frac{p_\theta(\vx_{t-1}|\vx_{t},\vc)}{q(\vx_{t-1}|\vx_{t},\vx_0,\vc)}*\frac{q(\vx_{t-1}|\vx_0,\vc)}{q(\vx_{t}|\vx_0,\vc)}\right\}\\
            &= -\E_{q(\vx_{1:T}|\vx_0,\vc)}\left\{\log\frac{p_\theta(\vx_T|\vc)}{q(\vx_T|\vx_0,\vc)}+\log p_\theta(\vx_0|\vx_1,\vc)+\log\sum_{t>1}\frac{p_\theta(\vx_{t-1}|\vx_{t},\vc)}{q(\vx_{t-1}|\vx_{t},\vx_0,\vc)}\right\}\\
            &= \KL(q_\phi(\vx_T|\vx_0,\vc) \Vert p_{\theta}(\vx_T|\vc))-\E_{q(\vx_{1}|\vx_0,\vc)}\log p_{\theta}(\vx_0|\vx_1,\vc) \\
            &\quad +\sum_{t>1}\E_{q(\vx_t|\vx_0,\vc)} \KL(q_\phi(\vx_{t-1}|\vx_{t},\vx_0,\vc) \Vert p_{\theta}(\vx_{t-1}|\vx_t,\vc))
        \end{aligned}
    \end{equation}

\end{proof}
\begin{lemma}
    Assuming the relational network $\textcolor{brown}{\vr_\phi(\vx_0,\vc,t)}$ is Lipschitz continuous, i.e., $\forall t, \exists $a positive real number $ C_t$ s.t. $\Vert\vr_\phi(\vx_0,\vc,t)- \Vert\vr_\phi(\vx_0^{'},\vc,t)\Vert \leq C_t\Vert \vx_0-\vx_0^{'} \Vert$, then $\Vert \vf_\theta(\vx_t,\vc,t)-\vx_0\Vert^2_2$ is an upper bound of $\KL(q_\phi(\vx_{t-1}|\vx_{t},\vx_0,\vc) \Vert p_{\theta}(\vx_{t-1}|\vx_t,\vc))$ after scaling.\label{lemma_obj_2}
\end{lemma}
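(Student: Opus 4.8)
The plan is to exploit the fact that, under the parameterization $p_{\theta,\phi}(\vx_{t-1}|\vx_t,\vc) = q_\phi(\vx_{t-1}|\vx_t,\hat{\vx}_0,\vc)$ with $\hat{\vx}_0 = \vf_\theta(\vx_t,\vc,t)$, both $q_\phi(\vx_{t-1}|\vx_t,\vx_0,\vc)$ and $p_\theta(\vx_{t-1}|\vx_t,\vc)$ are Gaussians with the \emph{same} covariance $\tfrac{(1-\Bar{\alpha}_{t-1})\beta_t}{1-\Bar{\alpha}_t}\mI$ (by the Proposition proved above), differing only through their means. For two Gaussians sharing a common isotropic covariance $\sigma_t^2\mI$, the KL divergence collapses to $\tfrac{1}{2\sigma_t^2}\|\vmu_q-\vmu_p\|_2^2$, so the entire problem reduces to bounding the squared distance between the two means by a constant multiple of $\|\vf_\theta(\vx_t,\vc,t)-\vx_0\|_2^2$.

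First I would write out the mean difference explicitly from \cref{eq-ddpm-backward}. Substituting $\vx_0\mapsto\hat{\vx}_0$ in every occurrence (including inside the bias terms $\vb_t,\vb_{t-1}$) and subtracting, the $\vx_t$-dependence cancels, leaving
\[
\vmu_q-\vmu_p = \frac{\sqrt{\Bar{\alpha}_{t-1}}\beta_t}{1-\Bar{\alpha}_t}(\vx_0-\hat{\vx}_0) - \frac{\sqrt{\alpha_t}(1-\Bar{\alpha}_{t-1})}{1-\Bar{\alpha}_t}\big(\vb_t(\vx_0,\vc)-\vb_t(\hat{\vx}_0,\vc)\big) + \big(\vb_{t-1}(\vx_0,\vc)-\vb_{t-1}(\hat{\vx}_0,\vc)\big).
\]
Recalling $\vb_s(\cdot,\vc)=k_s\vr_\phi(\cdot,\vc,s)$, each bias difference is a fixed scalar multiple of $\vr_\phi(\vx_0,\vc,s)-\vr_\phi(\hat{\vx}_0,\vc,s)$, so the whole difference is controlled by $\vx_0-\hat{\vx}_0$ provided the relational network is well-behaved.

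Next I would apply the triangle inequality together with the Lipschitz hypothesis $\|\vr_\phi(\vx_0,\vc,s)-\vr_\phi(\hat{\vx}_0,\vc,s)\|\le C_s\|\vx_0-\hat{\vx}_0\|$ for $s\in\{t-1,t\}$. This gives $\|\vmu_q-\vmu_p\|\le M_t\|\vx_0-\hat{\vx}_0\|$ with $M_t = \tfrac{\sqrt{\Bar{\alpha}_{t-1}}\beta_t}{1-\Bar{\alpha}_t} + \tfrac{\sqrt{\alpha_t}(1-\Bar{\alpha}_{t-1})}{1-\Bar{\alpha}_t}k_tC_t + k_{t-1}C_{t-1}$, a finite constant depending only on $t$, the schedule, and the Lipschitz bounds. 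Squaring and dividing by $2\sigma_t^2$ yields $\KL(q_\phi\Vert p_\theta)\le \tfrac{M_t^2}{2\sigma_t^2}\|\vf_\theta(\vx_t,\vc,t)-\vx_0\|_2^2$, which is exactly the claimed bound ``after scaling,'' the scalar $\tfrac{M_t^2}{2\sigma_t^2}$ being absorbed into the weight $\lambda_t$ of \cref{eq_objective_final}.

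The main obstacle is the presence of the context bias terms $\vb_t,\vb_{t-1}$: unlike the vanilla DDPM posterior, the mean here depends on $\vx_0$ both linearly and nonlinearly through $\vr_\phi$, so a naive expansion does not immediately factor through $\vx_0-\hat{\vx}_0$. The Lipschitz assumption is precisely what lets me pull the nonlinear $\vr_\phi$ differences back to $\|\vx_0-\hat{\vx}_0\|$. I would also verify that the schedule coefficients and $k_t,k_{t-1}$ are bounded (which holds since $\Bar{\alpha}_t\in(0,1)$ and $k_t=\sqrt{\Bar{\alpha}_t}(1-\sqrt{\Bar{\alpha}_t})$), so that both $M_t$ and $1/\sigma_t^2$ are genuinely finite, and the resulting bound is nonvacuous.
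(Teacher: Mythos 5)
Your proposal is correct and follows essentially the same route as the paper's proof: reduce the KL between the two equal-covariance Gaussians to the squared distance between their means, expand the mean difference from \cref{eq-ddpm-backward} under the parameterization $\hat{\vx}_0=\vf_\theta(\vx_t,\vc,t)$, and control the bias-term differences via the triangle inequality and the Lipschitz hypothesis to get a constant multiple of $\Vert\hat{\vx}_0-\vx_0\Vert_2$. Your version is in fact slightly more careful than the paper's in making the final squaring step and the absorption of the constant into $\lambda_t$ explicit.
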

\begin{proof}
    From the main text, we know that 
    \begin{equation}\label{mean_l2}
        \begin{aligned}
                    & \KL(q_\phi(\vx_{t-1}|\vx_{t},\vx_0,\vc) \Vert p_{\theta}(\vx_{t-1}|\vx_t,\vc))
        \propto \\
        &\quad \big|\big|\vmu_\theta(\vx_t,c,t)-\frac{\sqrt{\Bar{\alpha}_{t-1}}\beta_t}{1-\Bar{\alpha}_t}\vx_0-\frac{\sqrt{\alpha_t}(1-\Bar{\alpha}_{t-1})}{1-\Bar{\alpha_t}}(\vx_t-\textcolor{brown}{\vb_t(\vx_0,\vc)})-\textcolor{brown}{\vb_{t-1}(\vx_0,\vc)}\big|\big|^2_{2},
        \end{aligned}
    \end{equation}
    where $\vmu_\theta(\vx_t,c,t)$ is the mean of $q_\theta(\vx_{t-1}|\vx_t,\vc)$. Here we discard a constant with respect to $\vx_0,\vx_t,\vc$. With our parameterization, 
    \begin{equation}
        \vmu_\theta(\vx_t,c,t) = \frac{\sqrt{\Bar{\alpha}_{t-1}}\beta_t}{1-\Bar{\alpha}_t}\hat{\vx}_0-\frac{\sqrt{\alpha_t}(1-\Bar{\alpha}_{t-1})}{1-\Bar{\alpha_t}}(\vx_t-\textcolor{brown}{\vb_t(\hat{\vx}_0,\vc)})-\textcolor{brown}{\vb_{t-1}(\hat{\vx}_0,\vc)},
    \end{equation}
    where $\hat{\vx}_0 = \vf_\theta(\vx_t,\vc,t)$. Thus the objective function can be simplified as:
    \begin{equation}
        \begin{aligned}
            &\big|\big|\frac{\sqrt{\Bar{\alpha}_{t-1}}\beta_t}{1-\Bar{\alpha}_t}(\hat{\vx}_0-\vx_0)+\frac{\sqrt{\alpha_t}(1-\Bar{\alpha}_{t-1})}{1-\Bar{\alpha_t}}(\textcolor{brown}{\vb_t(\hat{\vx}_0,\vc)}-\textcolor{brown}{\vb_t(\vx_0,\vc)})-(\textcolor{brown}{\vb_{t-1}(\hat{\vx}_0,\vc)}-\textcolor{brown}{\vb_{t-1}(\vx_0,\vc)})\big|\big|_2\\
            &\leq \frac{\sqrt{\Bar{\alpha}_{t-1}}\beta_t}{1-\Bar{\alpha}_t}\Vert\hat{\vx}_0-\vx_0\Vert_2+\frac{\sqrt{\alpha_t}(1-\Bar{\alpha}_{t-1})}{1-\Bar{\alpha_t}}\Vert\textcolor{brown}{\vb_t(\hat{\vx}_0,\vc)}-\textcolor{brown}{\vb_t(\vx_0,\vc)}\Vert_2+\Vert\textcolor{brown}{\vb_{t-1}(\hat{\vx}_0,\vc)}-\textcolor{brown}{\vb_{t-1}(\vx_0,\vc)}\Vert_2\\
            &\leq \frac{\sqrt{\Bar{\alpha}_{t-1}}\beta_t}{1-\Bar{\alpha}_t}\Vert\hat{\vx}_0-\vx_0\Vert_2+\frac{\sqrt{\alpha_t}(1-\Bar{\alpha}_{t-1})}{1-\Bar{\alpha_t}}k_tC_t\Vert\hat{\vx}_0-\vx_0\Vert_2+k_{t-1}C_{t-1}\Vert\hat{\vx}_0-\vx_0\Vert_2\\
            &= \lambda_t \Vert \vf_\theta(\vx_t,\vc,t)-\vx_0\Vert_2
        \end{aligned}
    \end{equation}
    Similar results can be proved for DDIMs by replacing the mean of posterior in DDPMs with DDIMs, defined by \cref{ddim_posterior}, in \cref{mean_l2}.
\end{proof}
Assume that the total diffusion step T is big enough and only a neglegible amount of noise is added to the data at the first diffusion step, then the term $\KL(q_\phi(\vx_T|\vx_0,\vc) \Vert p_{\theta}(\vx_T|\vc))-\E_{q(\vx_{1}|\vx_0,\vc)}\log p_{\theta}(\vx_0|\vx_1,\vc)$ is approximately zero. Now combining \cref{lemma_obj_1,lemma_obj_2}, we have the following proposition:
\begin{proposition}
    The objective function defined in \cref{eq_objective_final} is an upper bound of the negative log likelihood.
\end{proposition}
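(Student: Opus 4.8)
The plan is to obtain the proposition as a direct corollary of the two preceding lemmas, chained together by the approximation that the boundary terms vanish. First I would invoke \cref{lemma_obj_1} to conclude that the variational objective in \cref{eq-training-ori} already upper bounds the negative log likelihood $-\log p_\theta(\vx_0|\vc)$. This decomposes the bound into three groups: the terminal term $\KL(q_\phi(\vx_T|\vx_0,\vc) \Vert p(\vx_T|\vc))$, the first-step reconstruction term $-\E_{q(\vx_1|\vx_0,\vc)}\log p_\theta(\vx_0|\vx_1,\vc)$, and the sum $\sum_{t>1} \E_{q(\vx_t|\vx_0,\vc)}\,\KL(q_\phi(\vx_{t-1}|\vx_t,\vx_0,\vc) \Vert p_\theta(\vx_{t-1}|\vx_t,\vc))$ of the intermediate KL terms.

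Next I would dispose of the two boundary terms. Since we set $k_T = 0$, the terminal KL term contains no learnable parameters, so it is a constant that may be dropped without affecting the optimization; moreover, under the stated assumption that $T$ is large and only a negligible amount of noise is injected at the first diffusion step, both this terminal term and the first-step reconstruction term are approximately zero and hence contribute nothing material to the bound. This leaves only the sum over $t>1$ to be controlled.

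The core step is to bound each intermediate KL term by a weighted reconstruction loss. Here I would apply \cref{lemma_obj_2}: under the Lipschitz assumption on $\vr_\phi$, each summand is bounded above, after scaling by some finite $\lambda_t$, by $\E\Vert \vf_\theta(\vx_{t,\phi},\vc,t) - \vx_0\Vert^2_2$. Summing these per-step bounds over $t>1$ and folding the constants into the weights $\lambda_t$ reproduces exactly the simplified objective \cref{eq_objective_final}. Combining this with the inequality $-\log p_\theta(\vx_0|\vc) \le \text{(\cref{eq-training-ori})}$ from the first step then closes the chain and yields the proposition.

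I expect the main obstacle to be the bookkeeping of constants inside \cref{lemma_obj_2}: the triangle-inequality expansion of the mean-matching loss produces three contributions whose coefficients depend on $\bar{\alpha}_t$, $k_t$, $k_{t-1}$, and the Lipschitz constants $C_t$, $C_{t-1}$, and one must verify that these combine into a single finite weight $\lambda_t$ so that the per-step estimate is genuinely of the form $\lambda_t\,\E\Vert\vf_\theta - \vx_0\Vert^2_2$. Once that per-step reduction is secured, the remaining chaining and the handling of the negligible boundary terms are routine.
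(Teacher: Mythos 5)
Your proposal is correct and follows essentially the same route as the paper: invoke \cref{lemma_obj_1} for the variational bound, dismiss the terminal KL and first-step terms as (approximately) constant under the large-$T$ assumption, and apply \cref{lemma_obj_2} termwise to reach \cref{eq_objective_final}. Your flagged concern about folding the $\bar{\alpha}_t$, $k_t$, and Lipschitz constants into a single finite $\lambda_t$ is exactly the bookkeeping the paper carries out inside \cref{lemma_obj_2}, so nothing further is needed.
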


\subsection{Achieving Better Likelihood with \method}
Next, we show that \method is theoretically capable of achieving better likelihood compared to original DDPMs.  As the exact likelihood is intractable, we aim to compare the optimal variational bounds for negative log likelihoods. The objective function of \method at time step t is $E_{q_\phi}D_{KL}(q_\phi(\vx_{t-1}|\vx_t,\vx_0,\vc)||p_\theta(\vx_{t-1}|\vx_t,\vc))$, and its optimal solution is 
\begin{equation}
    \begin{aligned}
        &\min_{\phi,\theta}\E_{q_\phi}D_{KL}(q_\phi(\vx_{t-1}|\vx_t,\vx_0,\vc)||p_\theta(\vx_{t-1}|\vx_t,\vc))\\
        &=min_{\phi}[min_{\theta}\E_{q_\phi}D_{KL}(q_\phi(\vx_{t-1}|\vx_t,\vx_0,\vc)||p_\theta(\vx_{t-1}|\vx_t,\vc))]\\
        &\leq min_{\theta}\E_{q_{\phi=0}}D_{KL}(q_{\phi=0}(\vx_{t-1}|\vx_t,\vx_0,\vc)||p_\theta(\vx_{t-1}|\vx_t,\vc)),
    \end{aligned}
\end{equation}
where $\phi=0$ denotes setting the adapter network identical to 0, and thus $min_{\theta}\E_{q_{\phi=0}}D_{KL}(q_{\phi=0}(\vx_{t-1}|\vx_t,\vx_0,\vc)||p_\theta(\vx_{t-1}|\vx_t,\vc))$ is the optimal loss of origianl DDPMs objective at time t. Similar inequality can be obtained for t=1:
\begin{equation}
    \begin{aligned}
    &\min_{\phi,\theta}\E_{q_\phi}-\log p_\theta(\vx_0|\vx_1,\vc)\\
    &\leq  \min_{\theta}\E_{q_{\phi=0}}-\log p_\theta(\vx_0|\vx_1,\vc).
    \end{aligned}
\end{equation}
As a result,  we have the following inequality by summing up the objectives at all time step: 
\begin{equation}
    \begin{aligned} &-\E_{q(\vx_0)}\log p_\theta(\vx_0)\\
    &\leq min_{\phi,\theta} \sum_{t>1} \E_{q_\phi} D_{KL}(q_\phi(\vx_{t-1}|\vx_t,\vx_0,\vc)||p_\theta(\vx_{t-1}|\vx_t,\vc))+\E_{q_\phi}-\log p_\theta(\vx_0|\vx_1,\vc)+C\\
    &\leq min_{\theta} \sum_{t>1} \E_{q_{\phi=0}}D_{KL}(q_{\phi=0}(\vx_{t-1}|\vx_t,\vx_0,\vc)||p_\theta(\vx_{t-1}|\vx_t,\vc))+\E_{q_{\phi=0}}-\log p_\theta(\vx_0|\vx_1,\vc)+C
    \end{aligned}
\end{equation}
, where $C = \E\KL(q_\phi(\vx_T|\vx_0,\vc) \Vert p_{\theta}(\vx_T|\vc))$ is a constant defined by $\sqrt{\bar{\alpha}_T}$. Hence, \method has a tighter bound for the NLL, and thus theoretically capable of achieving better likelihood, compared with the original DDPMs.

\subsection{Better Expression of Cross-modal Semantics}
We provide an in-depth analysis on why \method can better express the cross-modal semantics. Our analysis focuses on the case of optimal estimation, as the theoretical analysis of convergence requires understanding the non-convex optimization of neural network, which is beyond the scope of this paper. Based on objective function in \cref{eq_objective_final}, the optimal solution of  \method at time t can be expressed as 
\begin{equation}
    \begin{aligned}
        &\arg\min_{\phi,\theta}\E_{q_\phi(\vx_t,\vx_0|\vc)}||\vx_0-f_\theta(\vx_t,\vc)||^2\\
        & = \arg\min_{\phi}\arg\min_{\theta}\E_{q_\phi(\vx_t,\vx_0|\vc)}||\vx_0-f_\theta(\vx_t,\vc)||^2\\
        &=\arg\min_{\phi}\E_{q_\phi(\vx_t|\vc)}E_{q_\phi(\vx_0|\vx_t,\vc)}||\vx_0-\E[\vx_0|\vx_t,\vc]||_2^2 \\
        & = \phi^*,\theta^*\\
    \end{aligned}
\end{equation}
since the best estimator under L2 loss is the conditional expectation. As a result, the optimal estimator of \method for $\vx_0$ is 
\begin{equation}
    \E[\vx_0|k_t \vr_{\phi^*}(\vx_0,c)+\sqrt{\bar{\alpha}_t}\vx_0+\sqrt{1-\bar{\alpha}_t}\mathbf{\epsilon}, \vc],
\end{equation}
while existing methods that did not incorporate cross-modal contextual information in the forward process have the following optimal estimator:
\begin{equation}
    \E[\vx_0|\sqrt{\bar{\alpha}_t}\vx_0+\sqrt{1-\bar{\alpha}_t}\mathbf{\epsilon}, \vc].
\end{equation}
Compared with existing methods,  \method can explicitly utilize the cross-modal context $\vr_{\phi^*}(x_0,c)$ to optimally recover the ground truth sample, and thus achieve better multimodal semantic coherence.

Furthermore, we analyze a toy example to show that \method can indeed utilize the cross-modal relations to better recover the ground truth sample. We consider the image embedding $x_0$ and text embedding $c$ that were generated with the following mechanism:
\begin{equation}
    x_0 = \mu(c)+\sigma(c)\epsilon,
\end{equation}
where $\epsilon$ is an independent standard gaussain, $\mu(c)$ and $\sigma^2(c)$ are the mean and variance of $x_0$ conditioned on $c$. We believe this simple model can capture the multimodal relationships in the embedding space, where the relevant images and text embeddings are closely aligned with each other. Then $x_t = \sqrt{\bar{\alpha}_t} x_0+\sqrt{1-\bar{\alpha}_t} \epsilon^{'}$ is the noisy image embedding in original diffusion model. We aim to calculate and compare the optimal estimation error at time step t in  original diffusion model and in \method:
\begin{equation}
    \begin{aligned}
    &\min_\theta \E||x_0-f_\theta(x_0,c)||_2^2 \\
 &=\E ||x_0-\E[x_0|x_t,c]||_2^2 \end{aligned}
\end{equation}
The conditional expectation as the optimal estimator of DDPMs can be calculated as:
$$\begin{aligned}\E[x_0|x_t,c]
&= \mu(c) - Cov(x_0,x_t|c)*Var(x_t|c)^{-1}(\sqrt{\bar{\alpha}_t}\mu(c)-x_t)\\
&= \mu(c) - \frac{\sqrt{\bar{\alpha}_t}\sigma(c)^2}{\bar{\alpha}_t\sigma(c)^2+1-\bar{\alpha}_t}(\sqrt{\bar{\alpha}_t}\mu(c)-x_t) \end{aligned}$$
As a result, we can calculate the estimation error of DDPMs:
\begin{equation}
    \begin{aligned} &\E||x_0-E[x_0|x_t,c]||_2^2 \\
&=E||\sigma(c)\epsilon-\frac{\sqrt{\bar{\alpha}_t}\sigma(c)^2}{\bar{\alpha}_t\sigma(c)^2+1-\bar{\alpha}_t}(\sqrt{\bar{\alpha}_t}\sigma(c)\epsilon+\sqrt{1-\bar{\alpha}_t} \epsilon^{'})||_2^2\\
& = d*\frac{(1- \bar{\alpha}_t)\bar{\alpha}_t\sigma(c)^4+ \sigma^2(c)(1-\bar{\alpha}_t)^2}{(\bar{\alpha}_t \sigma^2(c)+1- \bar{\alpha}_t)^2}\\
& = d *\sigma(c)^2 \frac{1- \bar{\alpha}_t}{\bar{\alpha}_t \sigma^2(c)+1- \bar{\alpha}_t}
 \end{aligned}\label{toy_original}
\end{equation}
Now we use \method with a parameterized adapter : $x_t = \sqrt{\bar{\alpha}_t} x_0+\sqrt{1-\bar{\alpha}_t} \epsilon^{'}+r(\phi,c,t)x_0$
 , where $r(\phi,c,t)x_0$ is the adapter. We can similarly calculate the conditional mean as the optimal estimator of \method:
 $$\E_\phi[x_0|x_t,c] = \mu(c)-\frac{\sigma^2(c)(r(\phi,c,t)+\sqrt{\bar{\alpha}_t})}{1-\bar{\alpha}_t+(r(\phi,c,t)+\sqrt{\bar{\alpha}_t})^2\sigma^2}*((r(\phi,c,t)+\sqrt{\bar{\alpha}_t})\mu(c)-x_t)$$
 And the estimation error for a given $\phi$ in \method is:
\begin{equation}
    \begin{aligned}
&\E||x_0-\E_\phi[x_0|x_t,c]||_2^2\\
&=\E||\sigma(c)\epsilon-\frac{\sigma^2(c)(r(\phi,c,t)+\sqrt{\bar{\alpha}_t})}{1-\bar{\alpha}_t+(r(\phi,c,t)+\sqrt{\bar{\alpha}_t})^2\sigma^2(c)}((r(\phi,c,t)+\sqrt{\bar{\alpha}_t})\sigma(c)\epsilon+\sqrt{1-\bar{\alpha}_t} \epsilon^{'})||_2^2\\
&= d \sigma(c)^2\frac{1-\bar{\alpha}_t}{1-\bar{\alpha}_t+(r(\phi,c,t)+\sqrt{\bar{\alpha}_t})^2\sigma^2(c)}
 \end{aligned}\label{toy_context}
\end{equation}
Comparing the denominators of two estimation errors (\cref{toy_original,toy_context}), we can see that using a non-negative adapter will always reduce the estimation error.

\section{More Model Analysis}
\paragraph{Comparison on Computational Costs}
We compare our method with LDM and Imagen regarding parameters, training time, and testing time in \cref{table:computational}. We find that our context-aware adapter (188M) only introduces few additional parameters and computational costs to the diffusion backbone (3000M), and substantially improves the generation performance, achieving a better trade-off than previous diffusion models.
\begin{table}[ht]
\caption{Comparison on Computational Costs.}
\centering
\begin{tabular}{l cccccc} 
\toprule
 Method &  \#Params & \#Training Cost & \# Inference Cost&FID \\ 
\midrule
LDM &1.4B& 0.39 s/Img&3.4 s/Img& 12.64\\
SDXL&10.3B&0.71 s/Img&9.7 s/Img&8.32\\
DALL·E 2&6.5B&2.38 s/Img&21.9 s/Img&10.39\\
Imagen&3B&0.79 s/Img&13.2 s/Img&7.27\\
Our \method&3B+188M&0.83 s/Img&13.4 s/Img&\textbf{6.48}\\
\bottomrule
\end{tabular}
\label{table:computational}
\end{table}

\paragraph{Contributing to Faster Convergence}
In the \cref{fig:convergence} of main text, we generalize our context-aware adapter to other video diffusion model Tune-A-Video and make a faster and better model convergence. Here, we additionally generalize the adapter to image diffusion model LDM and plot the partial training curve on the subset of LAION dataset in \cref{app-convergence}. Similar to video domain, our context-aware adapter can also substantially improve the model convergence for image diffusion models, demonstrating the effectiveness and generalization ability of our method.
\begin{figure*}[ht]
\centering
\includegraphics[width=0.7\textwidth]{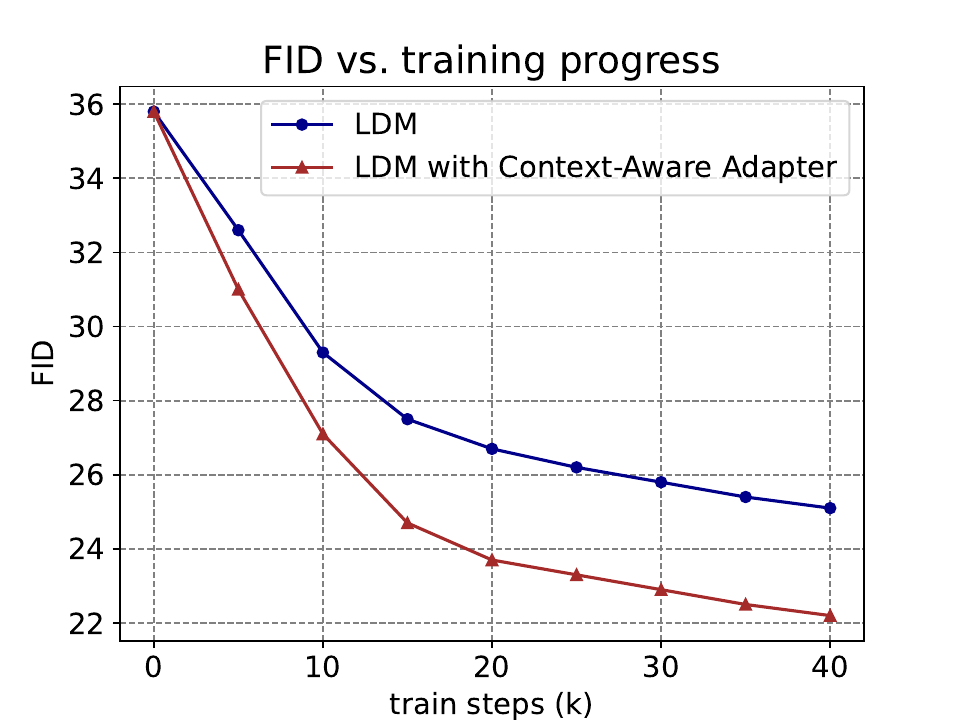}
\caption{Our context-aware adapter can enable faster model convergence (partial training curve)}.
\label{app-convergence}
\end{figure*}

\paragraph{Quantitative Results on Likelihood}
We follow \citep{kim2022maximum} to 
compute NLL/NELBO (Negaitve
Log-Likelihood/Negative Evidence Lower Bound) for performances of density estimation with Bits Per Dimension (BPD). We train our context-aware adapter on CIFAR-10 and compute NLL with the uniform dequantization. As the results in \cref{table-elbo}, we conclude that our method is empirically capable of achieving better likelihood compared to original DDPMs.
\begin{table}[ht]
\caption{NLL comparison on CIFAR-10.}
\centering
\begin{tabular}{l c} 
\toprule
 Method &  NLL $\downarrow$ \\ 
\midrule
DDPM \citep{jahn2021high}&3.01\\
DDPM + Context-Aware Adapter&\textbf{2.63}\\
\bottomrule
\end{tabular}
\label{table-elbo}
\end{table}

\section{Hyper-parameters in \method} 
\label{app-hyperparameters}
We provide detailed hyper-parameters in training \method for text-to-image generation (in \cref{table:hyper-image}) and text-to-video editing (in \cref{table:hyper-video}).

\begin{table}[ht]
\small
\caption{Hyper-parameters in training our \method for text-to-image generation.}
\centering
\begin{tabular}{l c} 
\toprule
 Configs/Hyper-parameters &  Values \\ 
\midrule
 $T$ & 1000 \\ 
 Noise schedule & cosine \\
  Number of transformer blocks for cross-modal interactions & 4 \\
 Betas of AdamW \citep{loshchilov2018decoupled} & (0.9, 0.999) \\
 Weight decay & 0.0 \\
 Learning rate & 1$e$-4 \\
 Linear warmup steps & 10000 \\
Batch size &1024 \\

\bottomrule
\end{tabular}
\label{table:hyper-image}
\end{table}
\begin{table}[ht]
\small
\caption{Hyper-parameters in training our \method for text-to-video editing.}
\centering
\begin{tabular}{l c} 
\toprule
 Configs/Hyper-parameters &  Values \\ 
\midrule
 $T$ & 20 \\ 
 Noise schedule & linear \\
      ($\beta_{start},\beta_{end}$) &$(0.00085,0.012)$\\
Number of transformer blocks for cross-modal interactions & 4 \\
Frames for causal attention &3\\
 Betas of AdamW \citep{loshchilov2018decoupled} & (0.9, 0.999) \\
 Weight decay & 1$e$-2 \\
 Learning rate & 1$e$-5 \\
 Warmup steps & 0 \\
Use checkpoint & True \\
Batch size &1 \\
Number of frames &8$\sim$24\\
Sampling rate &2 \\
\bottomrule
\end{tabular}
\label{table:hyper-video}
\vspace{-0.2in}
\end{table}




\section{Generalizing to Other Text-Guided Video Diffusion Models}
\label{app-sec-general}

\paragraph{Qualitative Results} In order to fully demonstrate the generalization ability of the context-aware adapter in our contextualized diffusion, we visualize more qualitative comparison results, where we utilize context-aware adapter to improve Tune-A-Video \citep{wu2022tune} (in \cref{app-video-more-ablation-tune}) and FateZero \citep{qi2023fatezero} (in \cref{app-video-more-ablation-fatezero1} and \cref{app-video-more-ablation-fatezero2}). From the results, we observe that our context-aware adapter can effectiveness promote the performance of text-to-video editing, significantly enhancing the semantic alignment while maintaining structural information in source videos. All video examples are also provided in the supplementary material, and we are committed to open sourcing the train/inference code upon paper acceptance.

\begin{figure*}[ht]
\centering
\includegraphics[width=0.9\textwidth]{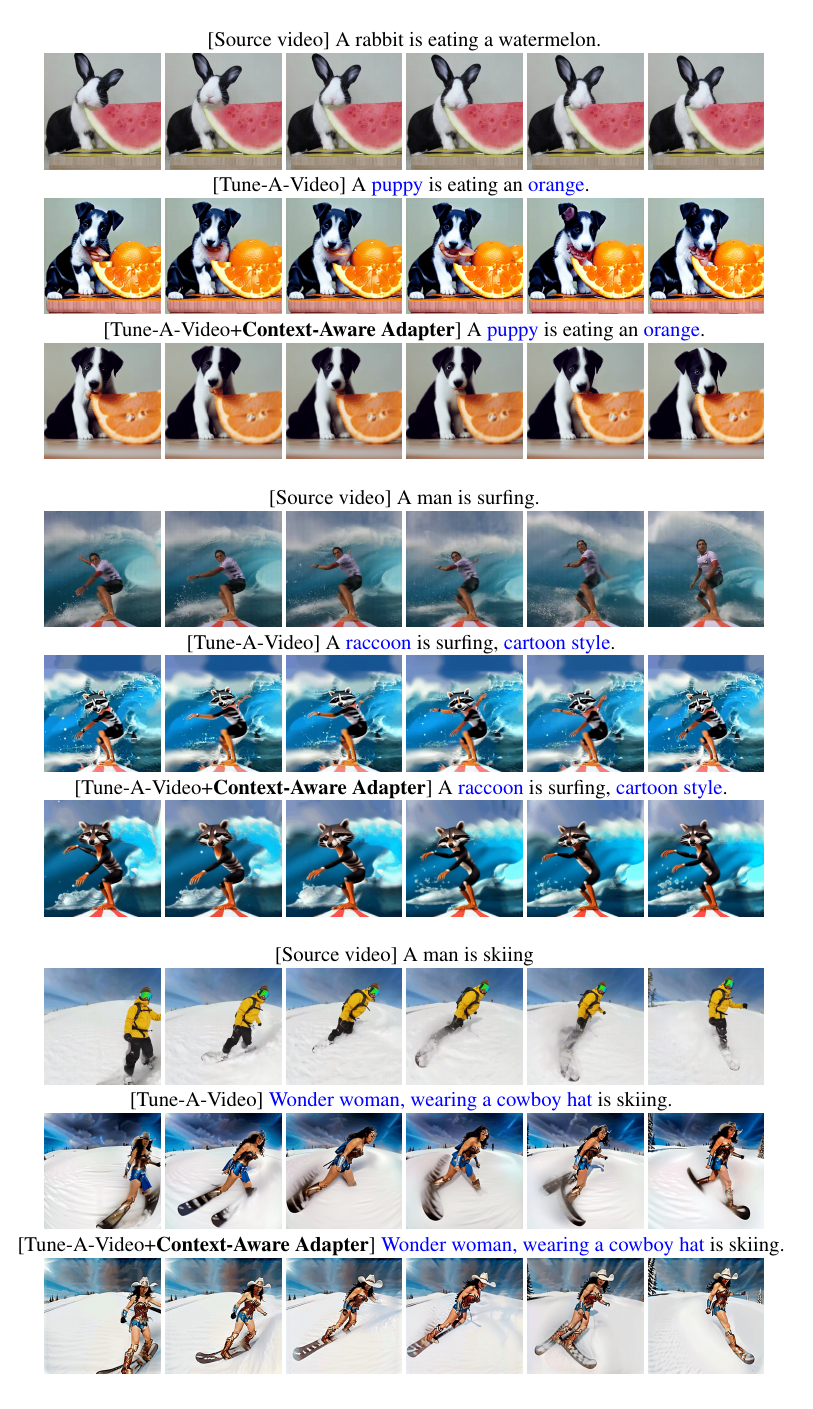}
\caption{Generalizing our context-aware adapter to Tune-A-Video \citep{wu2022tune}.}
\label{app-video-more-ablation-tune}
\end{figure*}

\begin{figure*}[ht]
\centering
\includegraphics[width=1.10\textwidth]{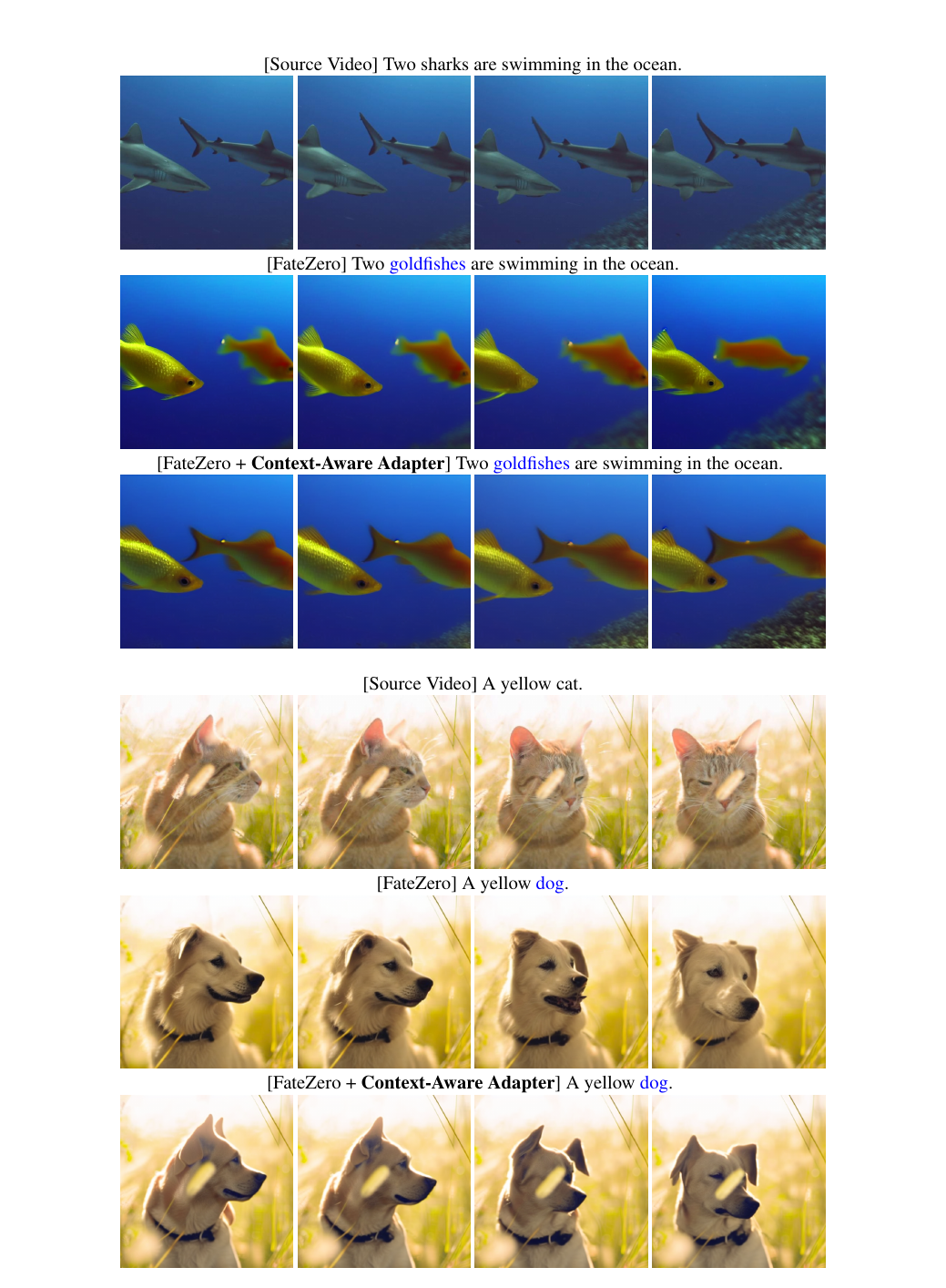}
\caption{Generalizing our context-aware adapter to FateZero \citep{qi2023fatezero}.}
\label{app-video-more-ablation-fatezero1}
\end{figure*}

\begin{figure*}[ht]
\centering
\includegraphics[width=1.10\textwidth]{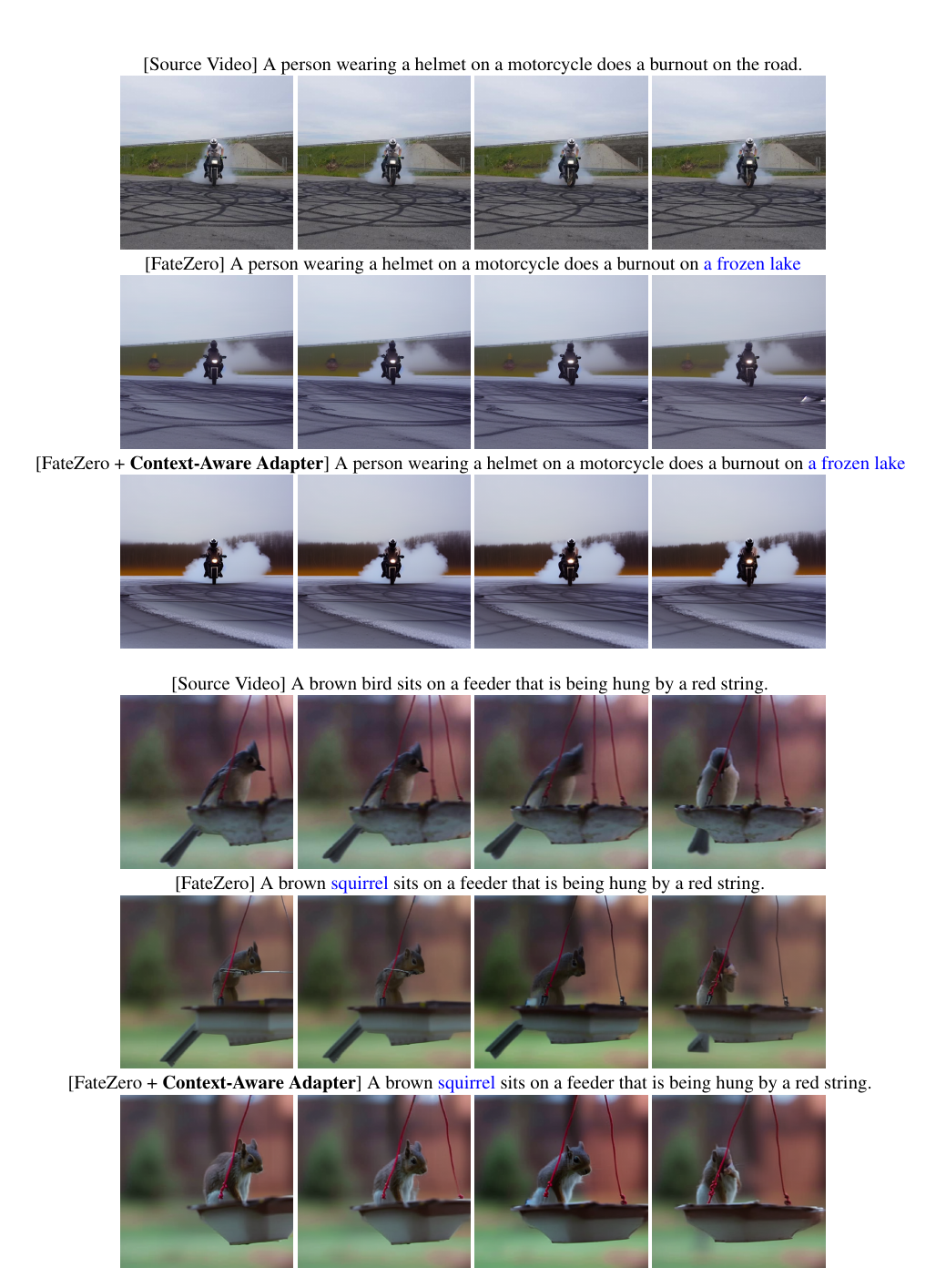}
\caption{Generalizing our context-aware adapter to FateZero \citep{qi2023fatezero}.}
\label{app-video-more-ablation-fatezero2}
\end{figure*}

\end{document}